\title{An Investigation into Neural Net Optimization via Hessian Eigenvalue Density}
\author{
  Behrooz Ghorbani\thanks{Work was done while author was an intern at Google.} \\
  Department of Electrical Engineering\\
  Stanford University\\
  \texttt{ghorbani@stanford.edu} \\
  \And
  Shankar Krishnan \\
  Machine Perception, Google Inc. \\
  \texttt{skrishnan@google.com} \\
  \And
  Ying Xiao \\
  Machine Perception, Google Inc.\\
  \texttt{yingxiao@google.com} \\
}
\newtheorem{theorem}{Theorem}[section]
\newtheorem{lemma}[theorem]{Lemma}
\newtheorem{claim}[theorem]{Claim}
\newcommand{\tr}[1]{\mathrm{tr} \left(#1\right)}
\newcommand{\expn}[1]{\mathrm{exp}\left(#1\right)}
\newcommand{\R}{\mathbb{R}}
\newcommand{\phis}{\phi_{\sigma}}
\newcommand{\phisv}{\phis^{(v)}}
\newcommand{\loss}{\mathcal{L}}
\newcommand{\phip}{\widehat{\phi}_{poly}}
\newcommand{\Noise}{\mathcal{S}}
\newcommand{\E}[1]{\mathbb{E}\left[#1\right]}
\newcommand{\hess}{\nabla^2 \loss}
\begin{document}

\maketitle

\begin{abstract}
To understand the dynamics of optimization in deep neural networks, we develop a tool to study the evolution of the entire Hessian spectrum throughout the optimization process. Using this, we study a number of hypotheses concerning smoothness, curvature, and sharpness in the deep learning literature. We then thoroughly analyze a crucial structural feature of the spectra: in non-batch normalized networks, we observe the rapid appearance of large isolated eigenvalues in the spectrum, along with a surprising concentration of the gradient in the corresponding eigenspaces. In batch normalized networks, these two effects are almost absent. We characterize these effects, and explain how they affect optimization speed through both theory and experiments. As part of this work, we adapt advanced tools from numerical linear algebra that allow scalable and accurate estimation of the entire Hessian spectrum of ImageNet-scale neural networks; this technique may be of independent interest in other applications.
\end{abstract}

\section{Introduction}
The Hessian of the training loss (with respect to the parameters) is crucial in determining many behaviors of neural networks. The eigenvalues of the Hessian characterize the local curvature of the loss which, for example, determine how fast models can be optimized via first-order methods (at least for convex problems), and is also conjectured to influence the generalization properties. Unfortunately, even for moderate sized models, exact computation of the Hessian eigenvalues is computationally impossible. Previous studies on the Hessian have focused on small models, or are limited to computing only a few eigenvalues \cite{sagun2016eigenvalues,sagun2017empirical,yao2018hessian}. In the absence of such concrete information about the eigenvalue spectrum, many researchers have developed clever \emph{ad hoc} methods to understand notions of smoothness, curvature, sharpness, and poor conditioning in the landscape of the loss surface. Examples of such work, where some surrogate is defined for the curvature, include the debate on flat vs sharp minima \cite{keskar2016large,dinh2017sharp,wu2017towards,jastrzkebski2017three}, explanations of the efficacy of residual connections \cite{li2018visualizing} and batch normalization \cite{santurkar2018does}, the construction of low-energy paths between  different local minima \cite{draxler2018essentially}, qualitative studies and visualizations of the loss surface \cite{goodfellow2014qualitatively}, and characterization of the intrinsic dimensionality of the loss \cite{li2018measuring}. In each of these cases, detailed knowledge of the entire Hessian spectrum would surely be informative, if not decisive, in explaining the phenomena at hand.

In this paper, we develop a tool that allows us access to the entire spectrum of a deep neural network. The tool is both highly accurate (we validate it to a double-precision accuracy of $10^{-14}$ for a 15000 parameter model), and highly scalable (we are able to generate the spectra of Resnets \cite{he2016deep} and Inception V3 \cite{szegedy2016rethinking} on ImageNet in a small multiple of the time it takes to train the model). The underlying algorithm is extremely elegant, and has been known in the numerical analysis literature for decades \cite{golub1969calculation}; here we introduce it to the machine learning community, and build (and release) a system to run it at modern deep learning scale.

This algorithm allows us to peer into the optimization process with  unprecedented clarity. By generating Hessian spectra with fine time resolution, we are able to study all phases of training, and are able to comment fruitfully on a number of hypotheses in the literature about the geometry of the loss surface. Our main experimental result focuses on the role of outlier eigenvalues, we analyze how the outlier eigenvalues affect the speed of optimization; this in turn provides significant insight into how batch normalization \cite{ioffe2015batch}, one of the most popular innovations in training deep neural nets, speeds up optimization.

We believe our tool and style of analysis will open up new avenues of research in optimization, generalization, architecture design etc. So we release our code to the community to accelerate a Hessian based analysis of deep learning.

\subsection{Contributions}
In this paper, we empirically study the full Hessian spectrum of the loss function of deep neural networks. Our contributions are as follows:

In Section \ref{sec:tools}, we introduce a tool and a system, for estimating the full Hessian spectrum, capable of tackling models with tens of millions of parameters, and millions of data points. We both theoretically prove convergence properties of the underlying algorithm, and validate the system to double precision accuracy $10^{-14}$ on a toy model.

In Section \ref{sec:experiments}, we use our tool to generate Hessian spectra along the optimization trajectory of a variety of deep learning models. In doing so, we revisit a number of hypotheses in the machine learning literature surrounding curvature and optimization. With access to the entire Hessian spectrum, we are able to provide new perspectives on a variety of interesting problems: we concur with many of the coarse descriptions of the loss surface, but disagree with a number of hypotheses about how learning rate and residual connections interact with the loss surface. Our goal is not necessarily to provide proofs or refutation -- at the very least, that would require the study of a more diverse set of models -- but to provide strong evidence for/against certain interesting ideas, and simultaneously to highlight some applications of our tool.

In Section \ref{sec:batch_norm}, we observe that models with significant outlier Hessian eigenvalues exhibit slow training behavior. We provide a theoretical justification for this in Section \ref{sec:outliers} -- we argue that a non-trivial fraction of energy of the Hessian is distributed across the bulk in tiny eigenvalues, and that a coupling between the stochastic gradients and the outlier eigenvalues prevents progress in those directions. We then show that batch normalization pushes these outliers back into the bulk, and are able to isolate this effect by ablating the batch normalization operation. In Section \ref{sec:testing_hypothesis}, we confirm the predictions of our hypothesis by studying a careful intervention to batch normalization that causes the resurgence of outlier eigenvalues, and dramatic slowdowns in optimization.

\subsection{Related Work}
\label{sec:related_work}

Empirical analysis of the Hessian has been of significance interest in the deep learning community. Due to computational costs of computing the exact eigenvalues ($O(n^3)$ for an explicit $n \times n$ matrix), most of the papers in this line of research either focus on smaller models or on low-dimensional projections of the loss surface. Sagun et al.\ \cite{sagun2016eigenvalues, sagun2017empirical} study the spectrum of the Hessian for small two-layer feed-forward networks. They show that the spectrum is divided into two parts: (1) a bulk concentrated near zero which includes almost all of the eigenvalues and (2) roughly ``number of classes - 1'' outlier eigenvalues emerging from the bulk. We extend this analysis in two ways. First, we calculate the Hessian for models with $>10^7$ parameters on datasets with $>10^6$ examples -- we find that many, but not all of the above observations hold at this scale, and refine some of their observations. Secondly, we leverage the scalability of our algorithm to compute and track the Hessian spectrum throughout the optimization (as opposed to only at the end). Observing this evolution allows us to study how individual architecture choices affect optimization. There is an extensive literature regarding estimating the eigenvalues distribution of large matrices (for a small survey, see \cite{lin2016approximating}). The algorithm we use is due to Golub and Welsch \cite{golub1969calculation}. While many of these algorithms have theoretical guarantees, their empirical success is highly dependent on the problem structure. We perform a thorough comparison of our work to the recent proposal of \cite{adams2018estimating} in Appendix \ref{app:comparison}.

Batch Normalization (BN) \cite{ioffe2015batch} is one of the most influential innovations in optimizing deep neural networks as it substantially reduces the training time and the dependence of the training on initialization. There has been much interest in determining the underlying reasons for this effect. The original BN paper suggests that as the model trains, the distribution of inputs to each layer changes drastically, a phenomenon called internal covariance shift (ICS). They suggest that BN improves training by reducing ICS. There has been a series of exciting new works exploring the effects of BN on the loss surface. Santurkar et al. \ \cite{santurkar2018does} empirically show that ICS is not necessarily related to the success of the optimization. They instead prove that under certain conditions, the Lipschitz constant of the loss and $\beta$-smoothness of the loss with respect to the activations and weights of a linear layer are improved when BN is present. Unfortunately, these bounds are on a per-layer basis; this yields bounds on the diagonal blocks of the overall Hessian, but does not directly imply anything about the overall $\beta$-smoothness of the entire Hessian.
In fact even exact knowledge of $\beta$ for the entire Hessian and parameter norms (to control the distance from the optimum) is insufficient to determine the speed of optimization: in Section \ref{sec:testing_hypothesis}, we exhibit two almost identical networks that differ only in the way batch norm statistics are calculated; they have almost exactly the same largest eigenvalue and the parameters have the same scale, yet the optimization speeds are vastly different. 

During the preparation of this paper, \cite{papyan2018full} appeared on Arxiv which briefly introduces the same spectrum estimation methodology and studies the Hessian on small subsamples of MNIST and CIFAR-10 at the end of the training. In comparison, we provide a detailed exposition, error analysis and validation of the estimator in Section \ref{sec:tools}, and present optimization results on full datasets, up to and including ImageNet.

\subsection{Notation}
Neural networks are trained iteratively. We call the estimated weights at optimization iteration $t$, $\hat{\theta}_t$, $0 \leq t \leq T$. We define the loss associated with batch $i$ be $\loss_i(\theta)$. The full-batch loss is defined as $\loss(\theta) \equiv \frac{1}{N}\sum_{i=1}^N \loss_i(\theta)$ where $N$ is the number of batches.\footnote{We define the loss in terms of per-batch loss (as opposed to the per sample loss) in order to accommodate networks that use batch normalization.} The Hessian, $\nabla^2 \loss(\theta) \in \R^{n \times n}$ is a symmetric matrix such that 
$\hess(\theta)_{i, j} = \frac{\partial^2}{\partial_{\theta_i}\partial_{\theta_j}} \loss(\theta)$.
Note that our Hessians are all ``full-batch'' Hessians (i.e., they are computed using the entire dataset). When there is no confusion, we represent $\nabla^2 \loss(\hat{\theta}_t)$ with $H\in \R^{n\times n}$. Throughout the paper, $H$ has the spectral decomposition $Q \Lambda Q^T$ where $\Lambda = diag(\lambda_1,\dots, \lambda_n)$, $Q=[q_1, \dots, q_n]$ and $\lambda_1 \geq \lambda_2 \cdots \geq \lambda_n$. 

\section{Accurate and Scalable Estimation of Hessian Eigenvalue Densities for $n > 10^7$}
\label{sec:tools}

To understand the Hessian, we would like to compute the eigenvalue (or spectral) density, defined as $\phi(t) = \frac{1}{n} \sum_{i=1}^n \delta(t - \lambda_i)$ where $\delta$ is the Dirac delta operator. The naive approach requires calculating $\lambda_i$; however, when the number of parameters, $n$, is large this is not tractable. We relax the problem by convolving with a Gaussian density of variance $\sigma^2$ to obtain:
\begin{align}\label{eqn:smoothed_density}
  \phis(t) = \frac{1}{n} \sum_{i=1}^n f(\lambda_i; t, \sigma^2)
\end{align}
where $f(\lambda; t, \sigma^2) =\frac{1}{\sigma \sqrt{2 \pi}} \expn{- \frac{(t - \lambda)^2}{2 \sigma^2}}.$ 
For small enough $\sigma^2$, $\phis(t)$ provides all practically relevant information regarding the eigenvalues of $H$. Explicit representation of the Hessian matrix is infeasible when $n$ is large, but using Pearlmutter's trick \cite{pearlmutter1994fast} we are able to compute Hessian-vector products for any chosen  vector. 

\subsection{Stochastic Lanczos Quadrature}
It has long been known in the numerical analysis literature that accurate stochastic approximations to the eigenvalue density can be achieved with much less computation than a full eigenvalue decomposition. In this section, we describe the \emph{stochastic Lanczos quadrature} algorithm \cite{golub1969calculation}. Although the algorithm is already known, its mathematical complexity and potential as a research tool warrant a clear exposition for a machine learning audience.  We give the pseudo-code in Algorithm \ref{meta_alg}, and describe the individual steps below, deferring a discussion of the various approximations to Section \ref{sec:high_accuracy}.

Since $H$ is diagonalizable and $f$ is analytic, we can define $f(H) = Q f(\Lambda) Q^T$ where $f(\cdot)$ acts point-wise on the diagonal of $\Lambda$. Now observe that if $v \sim N(0, \frac{1}{n} I_{n \times n})$, we have
\begin{align}
\phis(t) = \frac{1}{n} \tr{f(H, t, \sigma^2)} = \E{v^T f(H, t, \sigma^2) v} \label{eqn:smoothed}
\end{align}
Thus, as long as $\phisv(t) \equiv v^T f(H, t, \sigma^2) v$ concentrates fast enough, to estimate $\phis(t)$, it suffices to sample a small number of random $v$'s and average $\phisv(t)$.

\begin{algorithm}[H] 
Draw $k$ i.i.d realizations of $v$, $\{v_1 , \dots, v_k\}$.\;
\begin{itemize}
\item[I.]  Estimate $\phis^{(v_i)}(t)$ by a quantity $\widehat{\phi}^{(v_i)}(t)$:
\begin{itemize}
    \item Run the Lanczos algorithm for $m$ steps on matrix $H$ starting from $v_i$ to obtain tridiagonal matrix $T$.
    \item Compute eigenvalue decomposition $T = ULU^T$.
    \item Set the nodes $\ell_i = (L_{ii})_{i=1}^m$ and weights $\omega_i = (U^2_{1,i})_{i=1}^m$.
    \item Output $\widehat{\phi}^{(v_i)}(t) = \sum_{i=1}^m \omega_i f(\ell_i; t, \sigma^2)$.
\end{itemize}
\item[II.] Set $\widehat{\phi}_{\sigma}(t) = \frac{1}{k}\sum_{i=1}^k \widehat{\phi}^{(v_i)}(t)$.
\end{itemize}
\caption{Two Stage Estimation of $\phis(t)$} \label{meta_alg}
\end{algorithm}

By definition, we can write
\begin{align} \label{eqn:beta_sum}
\phisv(t) = v^T Q f(\Lambda; t, \sigma^2) Q^T v &= \sum_{i=1}^n (v^T q_i)^2 f(\lambda_i; t, \sigma^2) \nonumber \\
&= \sum_{i=1}^n \beta_i^2 f(\lambda_i; t, \sigma^2)
\end{align}
where $\beta_i \equiv (v^T q_i)$. Instead of summing over the discrete index variable $i$, we can rewrite this as a Riemann-Stieltjes integral over a continuous variable $\lambda$ weighted by $\mu$:
\begin{align} \label{eqn:integral}
\phisv(t) = \int_{\lambda_n}^{\lambda_1} f(\lambda; t, \sigma^2) d\mu(\lambda)
\end{align}
where $\mu$ is a CDF (note that the probability density $d \mu$ is a sum of delta functions that directly recovers Equation \ref{eqn:beta_sum})\footnote{Technically $\mu$ is a positive measure, not a probability distribution, because $||v||^2$ only concentrates on 1. This wrinkle is irrelevant.}.
\begin{align*}
  \mu(\lambda) = \begin{cases} 0 & \lambda < \lambda_n \\
  \sum_{i=1}^k \beta_i^2 & \lambda_k \le \lambda < \lambda_{k+1} \\
   \sum_{i=1}^n \beta_i^2 & \lambda \ge \lambda_1\end{cases}.
\end{align*}

To evaluate this integral, we apply a quadrature rule (a quadrature rule approximates an integral as a weighted sum -- the well-known high-school trapezoid rule is a simple example). In particular, we want to pick a set of weights $\omega_i$ and a set of nodes $l_i$ so that
\begin{align}\label{eqn:approx_form}
  \phisv(t) \approx \sum_{i=1}^m \omega_i f(\ell_i; t, \sigma^2) \equiv \widehat{\phi}^{(v)}(t)
\end{align}
The hope is that there exists a good choice of $(\omega_i, \ell_i)_{i=1}^m$ where $m \ll n$ such that $\phisv(t)$ and $\widehat{\phi}^{(v)}(t)$ are close for all $t$, and that we can find the nodes and weights efficiently for our particular integrand $f$ and the CDF $\mu$. The construction of a set of suitable nodes and weights is a somewhat complicated affair. It turns out that if the integrand were a polynomial $g$ of degree $d$, with $d$ small enough compared to $m$, it is possible to compute the integral exactly,
\begin{align}\label{eqn:exact}
    \int g d\mu = \sum_{i=1}^m w_i g(l_i).
\end{align}
\begin{theorem}[\cite{golub2009matrices} Chapter 6]\label{thm:gauss_quad_opt}
Fix $m$. For all $(\beta_i, \lambda_i)_{i=1}^n$, there exists an approximation rule generating node-weight pairs $(\omega_i, \ell_i)_{i=1}^m$ such that for any polynomial, $g$ with $deg(g) \leq 2m-1$, \eqref{eqn:exact} is true. This approximation rule is called the \textbf{Gaussian quadrature}. The degree $2m - 1$ achieved is maximal: for a general $(\beta_i, \lambda_i)_{i=1}^n$, no other approximation rule can guarantee exactness of Equation \eqref{eqn:exact} for higher degree polynomials. 
\end{theorem}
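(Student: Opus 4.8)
The plan is to prove the two halves of the statement separately: first exhibit an explicit $m$-node rule that is exact for all polynomials of degree $\le 2m-1$ (this is the Gaussian quadrature), and then show that no $m$-node rule can be exact to degree $2m$ in general.

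For the construction, I would work in the space of polynomials equipped with the inner product $\langle g, h\rangle = \int g h\, d\mu$ induced by the CDF $\mu$ of \eqref{eqn:integral}, and apply Gram--Schmidt to the monomials $1, \lambda, \lambda^2, \dots$ to obtain orthogonal polynomials $p_0, p_1, \dots, p_m$ with $\deg p_j = j$. Define the nodes $\ell_1, \dots, \ell_m$ to be the roots of $p_m$, and pin down the weights by demanding exactness on the $m$-dimensional space of polynomials of degree $\le m-1$: with $L_i$ the Lagrange basis polynomials at the $\ell_i$, set $\omega_i = \int L_i\, d\mu$, so the rule is interpolatory by construction. The heart of the argument is then a division trick: for any $g$ with $\deg g \le 2m-1$, polynomial division by $p_m$ gives $g = q\, p_m + r$ with $\deg q, \deg r \le m-1$; orthogonality of $p_m$ to all lower-degree polynomials kills $\int q p_m\, d\mu$, the fact that the $\ell_i$ are roots of $p_m$ kills $\sum_i \omega_i q(\ell_i) p_m(\ell_i)$, and exactness on degree $\le m-1$ disposes of the remainder, yielding $\int g\, d\mu = \int r\, d\mu = \sum_i \omega_i r(\ell_i) = \sum_i \omega_i g(\ell_i)$.

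Two facts must be checked to make this legitimate. First, the roots $\ell_i$ must be real, distinct, and lie in $[\lambda_n, \lambda_1]$, so that the interpolation is well defined and the output matches the Lanczos picture of Algorithm \ref{meta_alg}; this is the classical sign-change argument — if $p_m$ changed sign at only $j < m$ points $t_1, \dots, t_j$ of the support of $\mu$, then $p_m(\lambda)\prod_{k}(\lambda - t_k)$ would not change sign on the support and hence have nonzero integral, yet $\prod_k(\lambda-t_k)$ has degree $< m$ and is therefore $\mu$-orthogonal to $p_m$, a contradiction. Second — and this is the genuinely delicate point — Gram--Schmidt produces $p_0, \dots, p_m$ only when $1, \lambda, \dots, \lambda^{m}$ are linearly independent in $L^2(\mu)$, i.e. when the discrete measure $\mu$ (supported on $\{\lambda_i : \beta_i \neq 0\}$) has at least $m+1$ points of support; this can fail when $n < m+1$, when the $\lambda_i$ collide, or when some $\beta_i$ vanish. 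I would handle this degenerate regime first: if $\mu$ has only $N \le m$ support points, a rule exact for \emph{all} polynomials trivially exists (integrate exactly against the finitely many atoms), which is strictly stronger than claimed, so the theorem holds; thereafter assume $\mu$ has $\ge m+1$ support points. Getting this case split right is the main thing one has to be careful about.

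For optimality (the ``general $(\beta_i,\lambda_i)$'' clause), assume $\mu$ has at least $m+1$ support points. Given any proposed nodes $\ell_1, \dots, \ell_m$ and weights $\omega_i$, take $g(\lambda) = \prod_{i=1}^m (\lambda - \ell_i)^2$, of degree $2m$. It vanishes at every node, so $\sum_i \omega_i g(\ell_i) = 0$, whereas $g \ge 0$ and $g$ is strictly positive at the (at least one) support point of $\mu$ not among the $\ell_i$, so $\int g\, d\mu > 0$; hence \eqref{eqn:exact} fails at degree $2m$. As a byproduct, applying the exactness of the Gaussian rule to $L_i^2$ (degree $2m-2 \le 2m-1$) gives $\omega_i = \int L_i^2\, d\mu > 0$, so the Gauss weights are automatically nonnegative — worth recording, even though it is not needed for the statement.
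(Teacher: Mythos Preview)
The paper does not prove this theorem; it is quoted verbatim from \cite{golub2009matrices}, Chapter~6, and used as a black box in the exposition of stochastic Lanczos quadrature. So there is no in-paper argument to compare against.

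That said, your proposal is the classical proof and is correct in all essentials: the orthogonal-polynomial construction via Gram--Schmidt, the division-by-$p_m$ trick for exactness to degree $2m-1$, the sign-change argument forcing the roots of $p_m$ to be real, simple, and in $[\lambda_n,\lambda_1]$, and the degree-$2m$ obstruction $g(\lambda)=\prod_i(\lambda-\ell_i)^2$ for optimality are exactly the ingredients one finds in Golub--Meurant or any standard treatment. Your case split on whether $\mu$ has at least $m+1$ support points is the right way to make the statement literally true ``for all $(\beta_i,\lambda_i)$'': when the support is $\le m$ points you can simply take those atoms as nodes (padding with zero-weight nodes) and integrate every function exactly, which vacuously covers degree $\le 2m-1$; and the optimality clause is stated only for ``general'' data, which your $\ge m+1$ support-point assumption captures. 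The bonus observation that $\omega_i = \int L_i^2\,d\mu > 0$ is also correct and matches the paper's remark immediately after the theorem that Gaussian quadrature weights are nonnegative.
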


The Gaussian quadrature rule always generates non-negative weights. Therefore, as $f(\cdot; t, \sigma) \geq 0$, it is guaranteed that $\widehat{\phi} \geq 0$ which is a desirable property for a density estimate. For these reasons, despite the fact that our integrand $f$ is not a polynomial, we use the Gaussian quadrature rule. For the construction of the Gaussian quadrature nodes and weights, we rely on a deep connection between Gaussian quadrature and Krylov subspaces via orthogonal polynomials. We refer the interested reader to the excellent \cite{golub2009matrices} for this connection.
\begin{theorem}[\cite{golub1969calculation}] \label{thm:naive_computation}
Let $V = [v, Hv, \cdots, H^{m-1}v] \in \R^{n \times m}$ and $\tilde{V}$ be the incomplete basis resulting from applying QR factorization on $V$. Let $T \equiv \tilde{V}^TH\tilde{V} \in \R^{m \times m}$ and $ULU^T$ be the spectral decomposition of $T$. Then the Gaussian quadrature nodes $\ell_i$ are given by $(L_{i, i})_{i=1}^m$, and the Gaussian quadrature weights $\omega_i$ are given by $(U_{1, i}^2)_{i=1}^m$.
\end{theorem}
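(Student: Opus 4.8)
The plan is to show that the eigenpairs of $T$ encode the Gaussian quadrature rule for the measure $\mu$ of \eqref{eqn:integral}, and I would do this by recasting the Krylov construction in the language of orthogonal polynomials and then verifying the node/weight formulas through a direct moment computation. First, identify the Krylov subspace $\mathcal{K}_m = \mathrm{span}\{v, Hv, \dots, H^{m-1}v\}$ with the space $\mathcal{P}_{m-1}$ of polynomials of degree at most $m-1$ via $p \mapsto p(H)v$. Under this identification the Euclidean inner product on $\mathcal{K}_m$ pulls back to the $\mu$-inner product, since $(p(H)v)^T(q(H)v) = v^T(pq)(H)v = \sum_i \beta_i^2 (pq)(\lambda_i) = \int p\,q\,d\mu$. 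Consequently, running QR on the columns of $V$ is the same as running Gram--Schmidt on $1, \lambda, \dots, \lambda^{m-1}$ with respect to $\mu$: the $j$-th column of $\tilde{V}$ is $q_{j-1}(H)v$, where $q_0, q_1, \dots$ are the orthonormal polynomials for $\mu$; in particular $q_0 \equiv 1/\norm{v}$, so the first column of $\tilde{V}$ is $v/\norm{v}$.

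With this dictionary, $T = \tilde{V}^T H \tilde{V}$ has entries $T_{j,k} = \langle q_{j-1}, \lambda\, q_{k-1}\rangle_\mu$, so $T$ is the symmetric tridiagonal Jacobi matrix of the three-term recurrence for the $q_j$ (tridiagonality also follows directly: $H\tilde{v}_k \in \mathcal{K}_{k+1}$, which is orthogonal to $\tilde{v}_j$ whenever $j \ge k+2$, and symmetry of $T$ handles the other side). At this point the classical theory of Gaussian quadrature (\cite{golub2009matrices}, Ch.~6 --- essentially the content behind Theorem \ref{thm:gauss_quad_opt}) says that the eigenvalues of an $m\times m$ Jacobi matrix are exactly the roots of $q_m$, i.e. the $m$ Gauss nodes, and that the normalized eigenvector at $\ell_i$ is proportional to $(q_0(\ell_i), \dots, q_{m-1}(\ell_i))^T$, so the square of its first entry is $q_0^2/\sum_j q_j(\ell_i)^2$, which is the Christoffel weight up to the total mass $\norm{v}^2$. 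That is already the assertion.

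To make this airtight without leaning on the orthogonal-polynomial machinery, I would instead verify the defining property of the Gauss rule by hand. Write $T = ULU^T$, let $e_1 \in \R^m$ be the first standard basis vector (so the $i$-th entry of $U^T e_1$ is $U_{1,i}$), and let $P = \tilde{V}\tilde{V}^T$ be the orthogonal projector onto $\mathcal{K}_m$. One shows by induction on $c$ that $\tilde{V}T^c e_1 = H^c(v/\norm{v})$ for $0 \le c \le m-1$, the point being that each step only multiplies a vector lying in $\mathcal{K}_m$ by $H$, on which $P$ acts as the identity. Splitting $j = a+b$ with $a,b \le m-1$ (and carrying one extra central factor of $H$ when $j$ is odd), a short computation then gives $e_1^T T^j e_1 = \norm{v}^{-2}\, v^T H^j v$ for every $0 \le j \le 2m-1$. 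Reading off the spectral decomposition of $T^j = UL^jU^T$,
\[
\sum_{i=1}^m \left(\norm{v}^2 U_{1,i}^2\right) \ell_i^{\,j} \;=\; \norm{v}^2\, e_1^T T^j e_1 \;=\; v^T H^j v \;=\; \int \lambda^j\, d\mu, \qquad 0 \le j \le 2m-1,
\]
so by linearity $\sum_i (\norm{v}^2 U_{1,i}^2) g(\ell_i) = \int g\,d\mu$ for every polynomial $g$ with $\deg g \le 2m-1$. An $m$-point quadrature rule exact up to degree $2m-1$ is unique and equals the Gaussian rule of Theorem \ref{thm:gauss_quad_opt} (the nodes must be the roots of the degree-$m$ orthogonal polynomial, and the weights are then interpolatory); hence $\ell_i = L_{i,i}$ are the Gauss nodes and the weights are $\norm{v}^2 U_{1,i}^2$. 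Since the Lanczos iteration is started from the normalized vector $v/\norm{v}$, the mass $\norm{v}^2$ disappears and the weights are $U_{1,i}^2$, as stated.

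The main obstacle is the moment identity $e_1^T T^j e_1 = \norm{v}^{-2} v^T H^j v$ all the way up to $j = 2m-1$ rather than merely $j = m-1$: the bookkeeping of exactly when the projector $P$ may be dropped has to be done by splitting the power symmetrically, and this is precisely the step that produces the ``degree $2m-1$'' exactness of Gaussian quadrature --- a naive argument that does not split the power only reaches $j \le m-1$, a factor of two short. Two minor wrinkles remain. If $v$ has a nonzero component on fewer than $m$ distinct eigenvalues of $H$, then $V$ is rank-deficient and QR returns an incomplete basis $\tilde{V} \in \R^{n \times m'}$ with $m' < m$; then $\mu$ is supported on $m'$ points, the $m'$-point rule reproduces $\mu$ exactly, and the statement should be read with $m$ replaced by $m'$. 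And the discrepancy between the weight $\norm{v}^2 U_{1,i}^2$ appearing above and the bare $U_{1,i}^2$ in the theorem is exactly the normalization subtlety flagged in the paper's footnote, resolved as noted by normalizing the starting vector.
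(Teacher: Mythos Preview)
The paper does not actually prove Theorem~\ref{thm:naive_computation}: it is stated with attribution to \cite{golub1969calculation} and then immediately used, with the reader referred to \cite{golub2009matrices} for the underlying orthogonal-polynomial theory. There is therefore nothing in the paper to compare your argument against.

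That said, your proposal is the standard and correct argument. Both routes you outline --- the orthogonal-polynomial identification (Krylov vectors $\leftrightarrow$ monomials, QR $\leftrightarrow$ Gram--Schmidt in $L^2(\mu)$, $T$ $\leftrightarrow$ Jacobi matrix, eigenpairs $\leftrightarrow$ Gauss nodes and Christoffel weights) and the direct moment-matching computation $e_1^T T^j e_1 = \norm{v}^{-2} v^T H^j v$ for $0 \le j \le 2m-1$ --- are exactly the classical Golub--Welsch proof, and either one suffices. Your handling of the two edge cases (rank deficiency of $V$ when $v$ touches fewer than $m$ eigenspaces, and the $\norm{v}^2$ normalization of the weights) is also correct and matches the paper's own footnote on the latter point.
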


Theorem \ref{thm:naive_computation} presents a theoretical way to compute the Gaussian quadrature rule (i.e., apply the $H$ matrix repeatedly and orthogonalize the resulting vectors). There are well-known algorithms that circumvent calculating the  numerically unstable $V$, and compute $T$ and $\tilde{V}$ directly. We use Lanczos algorithm \cite{lanczos1950iteration} (with full re-orthogonalization) to perform this computation in a numerically stable manner.

\subsection{Accuracy of Gaussian Quadrature Approximation}
\label{sec:high_accuracy}
 Intuition suggests that as long as $f(\cdot; t, \sigma^2)$ is close to some polynomial of degree at most $2m-1$, our approximation must be accurate (i.e., Theorem \ref{thm:gauss_quad_opt}). Crucially, it is not necessary to know the exact approximating polynomial, its mere existence  is sufficient for an accurate estimate. There exists an extensive literature on bounding this error; \cite{ubaru2017fast} prove that under suitable conditions that
\begin{align} \label{eqn:theoretical_bound}
\vert \widehat{\phi}^{(v)}(t) - \phisv (t)\vert \leq c \frac{1}{(\rho^2 - 1) \rho^{2m}}
\end{align}
where $\rho > 1$. The constant $\rho$ is closely tied to how well $f(\cdot;t, \sigma^2)$ can be approximated by Chebyshev polynomials. \footnote{We refer the interested reader to \cite{ubaru2017fast, demanet2010chebyshev} for more details} In our setting, as $\sigma^2$ decreases, higher-order polynomials become necessary to approximate $f$ well. Therefore, as $\sigma^2$ decreases, $\rho$ decreases and more Lanczos iterations become necessary to approximate the integral well. 

To establish a suitable value of $m$, we perform an empirical analysis of the error decay when $H$ corresponds to a neural network loss Hessian. In Appendix \ref{app:verify}, we study this error on a 15910 parameter feed-forward MNIST network, where the model is small enough that we can compute $\phisv(t)$ exactly. For $\sigma^2 = 10^{-5}$, a quadrature approximation of order $80$ achieves maximum double-precision accuracy of $10^{-14}$. Following these results, we use $\sigma^2 = 10^{-5}, m = 90$ for our experiments. Equation \ref{eqn:theoretical_bound} implies that the error decreases exponentially in $m$, and since GPUs are typically run in single precision, our $m$ is an extremely conservative choice. 

\subsection{Concentration of the Quadratic Forms} 
\label{sec:concentration}

Although $\phisv(\cdot)$ is an unbiased estimator for $\phis(\cdot)$, we must still study its concentration towards its mean. We prove:

\begin{claim}\label{cor:phi_concentrate}
Let $t$ be a fixed evaluation point and $k$ be the number of realizations of $v$ in step II. Let $a = \Vert f(H; t, \sigma^2) \Vert_F$ and $b = \Vert f(H; t, \sigma^2) \Vert_2$. Then for any $x>0$, 
\begin{align*} 
P\bigg(|\phis(t) - \widehat{\phi}_{\sigma}(t)| > \frac{2a}{n\sqrt{k}} \sqrt{x} +  \frac{2b}{k n} x \bigg) \leq 2\exp(-x).
\end{align*}

Alternatively, since $f(\cdot)$ is a Gaussian density, we can give norm independent bounds:  $\forall x > 0$, 
\begin{align} \label{eq:worst_case}
P\bigg(|\phis(t) - \widehat{\phi}_{\sigma}
(t)| > \epsilon(x) \bigg) \leq 2\exp(-x).
\end{align}
where $\epsilon(x) \equiv \sqrt{\frac{2}{\pi \sigma^2}} (\sqrt{\frac{x}{nk}} + \frac{x}{nk})$.
\end{claim}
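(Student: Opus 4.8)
The plan is to read $\widehat{\phi}_\sigma(t)$ as a Hutchinson-type trace estimator of $\phis(t)$ and apply a Laurent--Massart / Hanson--Wright tail bound for quadratic forms in Gaussian vectors. (The Lanczos quadrature error is handled separately and deterministically via \eqref{eqn:theoretical_bound} in Section \ref{sec:high_accuracy}, so throughout I treat $\widehat{\phi}^{(v_i)}(t) = \phi_\sigma^{(v_i)}(t) = v_i^T f(H;t,\sigma^2) v_i$.) Set $A \equiv f(H; t, \sigma^2) = Q f(\Lambda) Q^T$. Since the Gaussian density satisfies $f(\lambda_i; t, \sigma^2) \geq 0$, the matrix $A$ is symmetric positive semidefinite, with $\|A\|_2 = \max_i f(\lambda_i; t, \sigma^2) = b$ and $\|A\|_F = a$. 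By \eqref{eqn:smoothed}, $\E{v^T A v} = \tfrac{1}{n}\tr{A} = \phis(t)$, so $\widehat{\phi}_\sigma(t) = \tfrac1k\sum_{i=1}^k v_i^T A v_i$ is unbiased and $\widehat{\phi}_\sigma(t) - \phis(t)$ is a centered quadratic form in the i.i.d. Gaussians $v_1,\dots,v_k$.

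Next I homogenize the randomness: write $v_i = \tfrac{1}{\sqrt n} g_i$ with $g_i \sim N(0, I_n)$, stack $g = (g_1,\dots,g_k) \sim N(0, I_{nk})$, and let $\tilde A = \tfrac{1}{nk}(I_k \otimes A)$, so that $\widehat{\phi}_\sigma(t) - \phis(t) = g^T \tilde A g - \E{g^T \tilde A g}$. The eigenvalues of $\tilde A$ are the numbers $\tfrac{1}{nk}f(\lambda_i;t,\sigma^2) \geq 0$, each with multiplicity $k$; hence $\|\tilde A\|_F = \tfrac{\sqrt k}{nk}\|A\|_F = \tfrac{a}{n\sqrt k}$ and $\|\tilde A\|_2 = \tfrac{b}{nk}$. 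By rotational invariance of the Gaussian, $g^T\tilde A g - \tr{\tilde A} \stackrel{d}{=} \sum_j \mu_j(z_j^2 - 1)$ with $z_j$ i.i.d. $N(0,1)$ and $\mu_j \geq 0$ the eigenvalues of $\tilde A$. The standard sub-exponential tail bound for such sums — obtained by controlling the mgf via $\prod_j(1-2\theta\mu_j)^{-1/2}e^{-\theta\mu_j} \leq \exp\!\big(\sum_j \tfrac{\theta^2\mu_j^2}{1-2\theta\mu_j}\big)$ for $0 \leq \theta < \tfrac{1}{2\|\tilde A\|_2}$ and optimizing the Chernoff exponent (the Laurent--Massart argument) — gives $P\big(g^T\tilde A g - \tr{\tilde A} > 2\|\tilde A\|_F\sqrt x + 2\|\tilde A\|_2 x\big) \leq e^{-x}$ for the upper tail and $P\big(g^T\tilde A g - \tr{\tilde A} < -2\|\tilde A\|_F\sqrt x\big) \leq e^{-x}$ for the lower tail. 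Since the lower-tail threshold is the smaller one, both events $\pm\big(\widehat{\phi}_\sigma(t) - \phis(t)\big) > \tfrac{2a}{n\sqrt k}\sqrt x + \tfrac{2b}{nk}x$ have probability at most $e^{-x}$, and a union bound yields the first displayed inequality.

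For the norm-independent bound I substitute worst-case estimates: $f(\lambda; t, \sigma^2) \leq \tfrac{1}{\sigma\sqrt{2\pi}}$ pointwise gives $b \leq \tfrac{1}{\sigma\sqrt{2\pi}}$ and $a \leq \tfrac{\sqrt n}{\sigma\sqrt{2\pi}}$, so $\tfrac{2a}{n\sqrt k}\sqrt x + \tfrac{2b}{nk}x \leq \tfrac{2}{\sigma\sqrt{2\pi}}\big(\sqrt{\tfrac{x}{nk}} + \tfrac{x}{nk}\big) = \epsilon(x)$ using $\tfrac{2}{\sigma\sqrt{2\pi}} = \sqrt{\tfrac{2}{\pi\sigma^2}}$; then \eqref{eq:worst_case} follows from the first part since the tail event is monotone in the threshold. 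I do not expect a genuine obstacle: the work is entirely bookkeeping the constants so the thresholds come out as written and the failure probability as $2e^{-x}$ — in particular using positive semidefiniteness of $f(H;t,\sigma^2)$ so the lower tail carries no linear-in-$x$ term, and carefully propagating the $\tfrac{1}{nk}$ and $\sqrt k$ factors through the Frobenius and operator norms of the block-diagonal $\tilde A$.
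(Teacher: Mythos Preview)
Your proposal is correct and essentially identical to the paper's proof: both rewrite $\widehat{\phi}_\sigma(t)$ as a single Gaussian quadratic form in a stacked $nk$-dimensional vector against the block-diagonal matrix $I_k \otimes f(H;t,\sigma^2)$ (the paper puts the $1/(nk)$ scaling in the Gaussian variance, you put it in the matrix), apply the Laurent--Massart/Hanson--Wright tail for quadratic forms, and then substitute the pointwise Gaussian-density bound $f \le (2\pi\sigma^2)^{-1/2}$ for the norm-independent version. Your explicit treatment of the lower tail via positive semidefiniteness is slightly more careful than the paper, which silently applies its one-sided lemma to $-A$ and union-bounds.
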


Claim \eqref{cor:phi_concentrate} shows that $\widehat{\phi}_{\sigma}(t)$ concentrates exponentially fast around its expectation. Note in particular the $\sqrt{n}$ and higher powers in the denominator -- since the number of parameters $n > 10^6$ for cases of interest, we expect the deviations to be negligible. We plot these error bounds and prove Claim \ref{cor:phi_concentrate} in Appendix \ref{app:more_concentration}.

\subsection{Implementation, Validation and Runtime}
We implemented a large scale version of Algorithm \ref{meta_alg} in TensorFlow \cite{abadi2016tensorflow}; the main component is a distributed Lanczos Algorithm. We describe the implementation and performance in Appendix \ref{app:implementation}. To validate our system, we computed the exact eigenvalue distribution on the 15910 parameter MNIST model. Our proposed framework achieves $L_1(\phis, \widehat{\phi}_{\sigma}) \equiv \int_{-\infty}^{\infty} |\phis(t) -  \widehat{\phi}(t)|dt \approx 0.0012$ which corresponds to an extremely accurate solution. The largest model we've run our algorithm on is Inception V3 on ImageNet. The runtime is dominated by the application of the Hessian-vector products within the Lanczos algorithm; we run $O(mk)$ full-batch Hessian vector products. The remaining cost of the Lanczos algorithm is negligible at  $O(km^2n)$ floating point operations. For a Resnet-18 on ImageNet, running a single draw takes about half the time of training the model.

\begin{figure}[h]
\includegraphics[width=\textwidth]{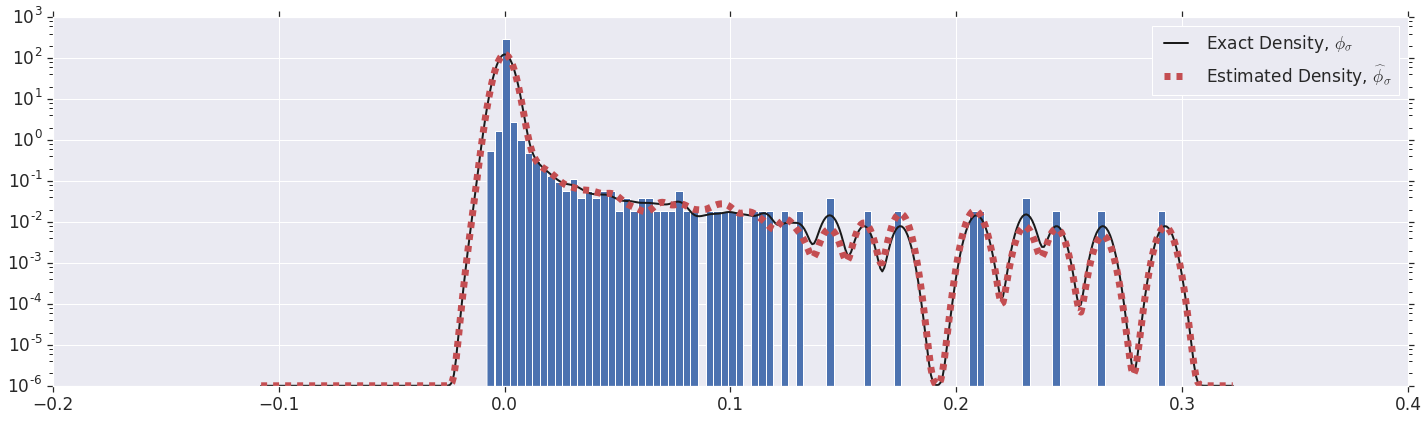}
\vspace{-0.5cm}
\caption{Comparison of the estimated smoothed density (dashed) and the exact smoothed density (solid) in the interval $[-0.2, 0.4]$. We use $\sigma^2 = 10^{-5}, k=10$ and degree $90$ quadrature. For completeness, the histogram of the exact eigenvalues is also plotted. \label{fig:verification}}
\end{figure}

In Appendix \ref{app:comparison}, we compare our approach to a recent proposal \cite{adams2018estimating} to use Chebyshev approximation for estimating the spectral density.

\section{Spectral densities throughout optimization} \label{sec:experiments}

The tool we developed in Section \ref{sec:tools} gives us an unprecedented ability to examine the loss landscape of deep neural networks. In particular, we can track the spectral density throughout the entire optimization process. Our goal in this section is to provide direct curvature evidence for (and against) a number of hypotheses about the loss surface and optimization in the literature. We certainly can not conclusively prove or refute even a single hypothesis within the space constraints, but we believe that the evidence is very strong in many of these cases.

For our analysis, we study a variety of Resnet and VGG \cite{simonyan2014very} architectures on both CIFAR-10 and ImageNet. Details are presented in Appendix \ref{app:details}. The Resnet-32 on CIFAR-10 has $4.6 \times 10^5$ parameters; all other models have at least $10^7$. For the sake of consistency, our plots in this section are of Resnet spectral densities; we have reproduced all these results on non-residual (VGG) architectures.

At initialization, we observe that large negative eigenvalues dominate the spectrum. However, as Figure \ref{fig:negative_eigenvalues} shows, in only very few steps ($<1\%$ of the total number of steps; we made no attempt to optimize this bound), these large negative eigenvalues disappear and the overall shape of the spectrum stabilizes. Sagun et al. \cite{sagun2016eigenvalues} had observed a similar disappearance of negative eigenvalues for toy feed-forward models after the training, but we are able to pinpoint this phase to the very start of optimization. This observation is readily reproducible on ImageNet.

\begin{figure}[ht]
\includegraphics[width=\textwidth]{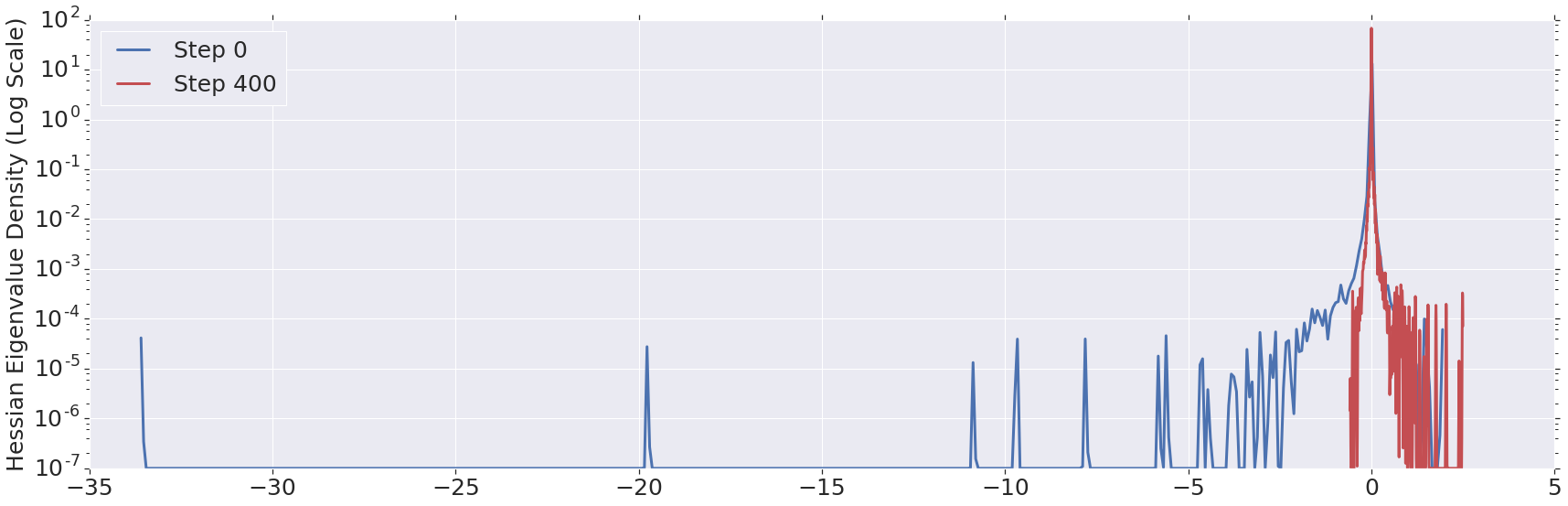}
\vspace{-0.5cm}
\caption{The evolution of the spectrum of a Resnet-32 in the beginning of training. After just $400$ momentum steps, large negative eigenvalues disappear. \label{fig:negative_eigenvalues}}
\end{figure}

Throughout the rest of the optimization, the spectrum is almost entirely flat, with the vast majority ($>99.99\%$ of eigenvalues being close to 0). This is in accordance with the ideas of Li et al. \cite{li2018measuring}, who hypothesize that the loss surface has low intrinsic dimensionality, and also with results of Sagun et al.\ on toy models. In the case of $K$-class classification with small two-layer feed-forward networks, Sagun et al.\  had observed that the Hessian spectrum contains roughly $K$ outliers which are a few orders of magnitudes larger than the rest of the eigenvalues. Contrary to this, we find that the emergence of these outliers is highly dependent on whether BN is present in the model or not. We study this behavior in depth in Section \ref{sec:batch_norm}.

Also in Sagun et al.\ is the observation that the negative eigenvalues at the end of the training are orders of magnitude smaller than the positive ones. While we are able to observe this on CIFAR-10, what
happens on ImageNet seems to be less clear (Figure \ref{fig:image_net_negatives}). We can derive a useful metric by integrating the spectral densities. At the end of optimization, the total $L_1$ energy of the negative eigenvalues is comparable to that of the positive eigenvalues (0.434 vs 0.449), and the $L_2$ energy is smaller, but still far from zero (0.025 vs 0.036). In comparison, on CIFAR-10 the $L_2$ energies are 0.025 and 0.179 in the negative and positive components respectively. We believe that the observation of Sagun et al.\ may be an artifact of the tiny datasets used -- on MNIST and CIFAR-10 one can easily attain zero classification loss (presumably a global minimum); on ImageNet, even a much larger model will fail to find a zero loss solution.
\begin{figure}[ht]
\includegraphics[width=\textwidth]{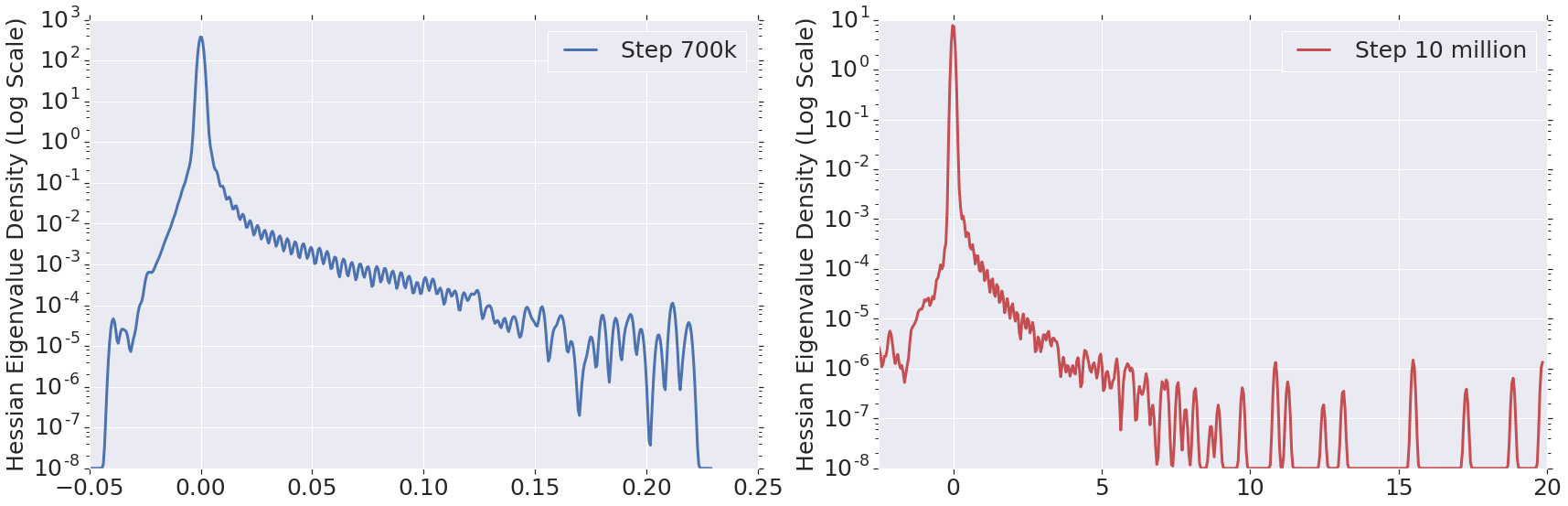}
\vspace{-0.5cm}
\caption{Spectral densities of Resnet-18 on ImageNet towards the start, and at the end of optimization. There is a notable negative density towards the end of optimization.
 \label{fig:image_net_negatives}}
\end{figure}

Jastrzkebski et al.\ \cite{jastrzkebski2017three}, building on a line of work surrounding flat and sharp minima, hypothesized that lower learning rates correspond to sharper optima. We consider this question by inspecting the spectral densities immediately preceding and following a learning rate drop. According to the hypothesis, we would then expect the spectral density to exhibit more extremal eigenvalues. In fact, we find the exact opposite to be true in Figure \ref{fig:rate_drop} -- not only do the large eigenvalues contract substantially after the learning rate drop at 40k steps, we have a lower density at all values of $\lambda$ except in a tiny ball around 0. This is an extremely surprising result, and violates the common intuition that lower learning rates allow one to slip into small, sharp crevices in the loss surface. We note that this is not a transient phenomenon -- the spectrum before and afterwards are stable over time.

\begin{figure}[ht]
\includegraphics[width=\textwidth]{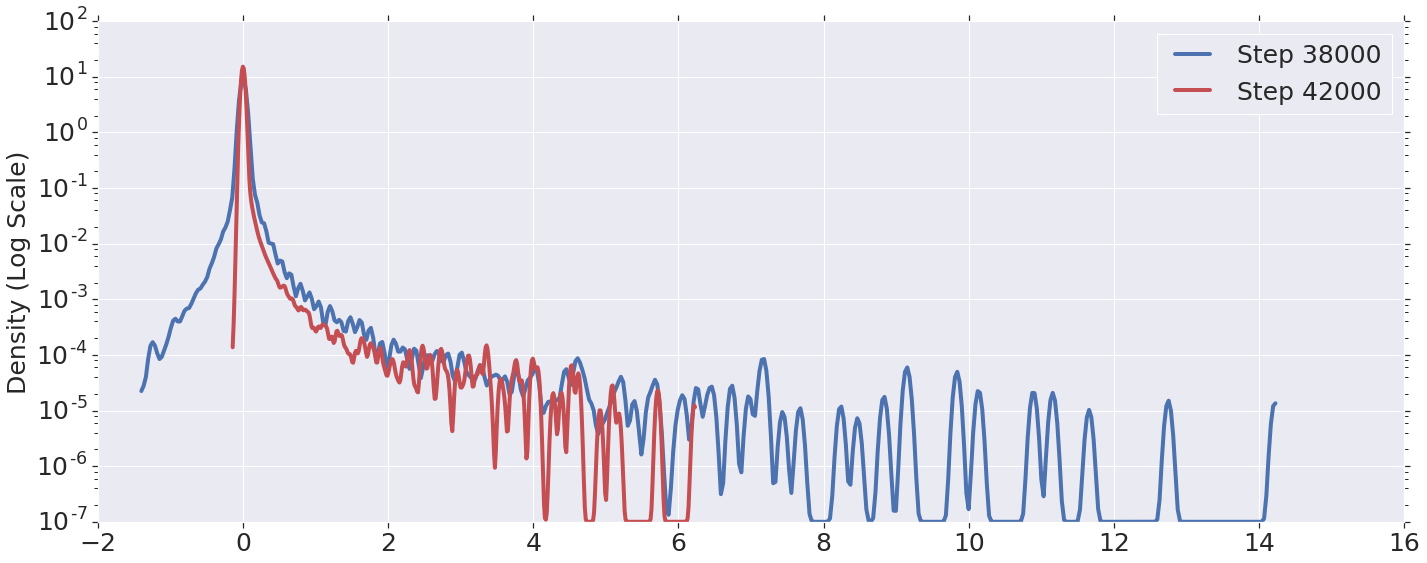}
\vspace{-0.5cm}
\caption{Spectral densities of Resnet-32 preceding and following a learning rate decrease (at step 40000). The Hessian prior to the learning rate drop appears sharper. \label{fig:rate_drop}}
\end{figure}

Finally, Li et al.\ \cite{li2018visualizing} recently hypothesized that adding residual connections significantly smooths the optimization landscape, producing a series of compelling two-dimensional visualizations. We compared a Resnet-32 with and without residual connections, and we observe in Figure \ref{fig:residual} that all eigenvalues contract substantially towards zero. This is contrary to the visualizations of Li et al.

\begin{figure}[ht]
\includegraphics[width=\textwidth]{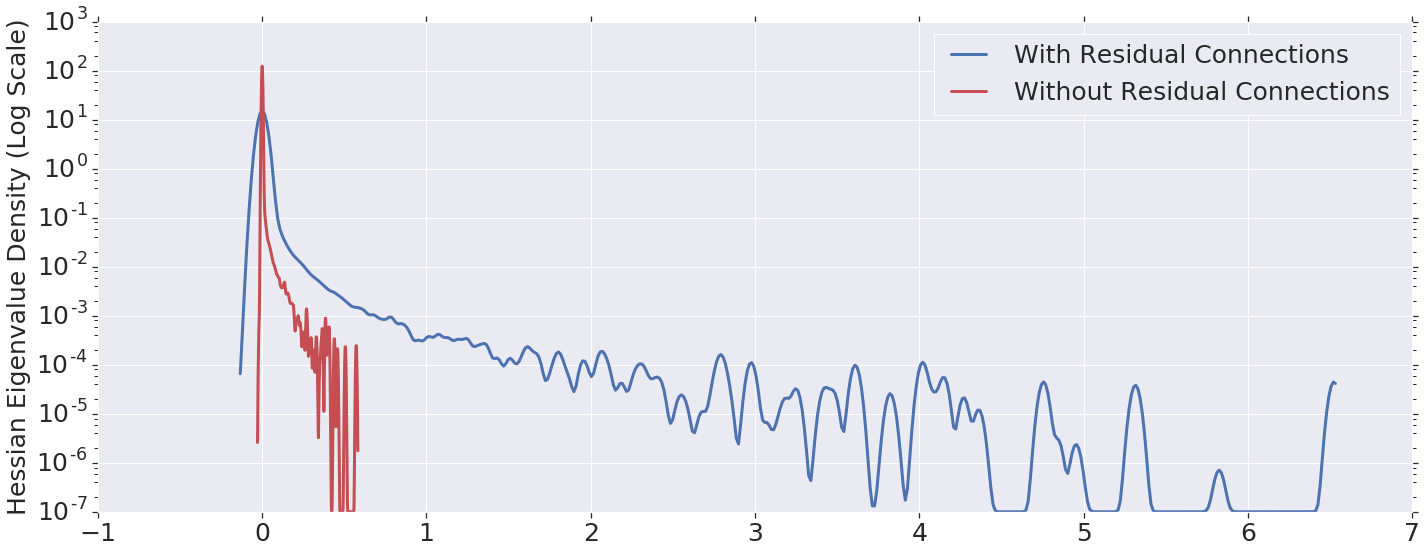}
\vspace{-0.5cm}
\caption{Spectral densities of Resnet-32 with and without residual connections (at step 40000). The Hessian without residual connections appears to be smoother. \label{fig:residual}}
\end{figure}

\section{Outlier Eigenvalues Slow Optimization; Batch Norm Suppresses Outliers} \label{sec:batch_norm}

In some of the spectral densities presented so far, perhaps the most salient feature is the presence of a small number of outlier eigenvalues that are located far from the bulk of the spectrum. We noticed that these outliers are much larger and much further from the bulk for some architectures than others (i.e., for VGG the outliers are extremely far, less so for Resnets). Suspecting that batch normalization was the crucial difference, we ran a series of ablation experiments contrasting the spectral density in the presence and absence of batch normalization (i.e., we added BN to models that did not already have it, and removed BN from models that already did). Figure \ref{fig:comparison_bn_Resnet} contrasts the the Hessian spectrum in the presence of BN vs the spectrum when BN is removed. The experiment yields the same results on VGG  on CIFAR-10 (Figure \ref{fig:comparison_bn_vgg}), and Resnet-18 on ImageNet (Figure \ref{fig:comparison_bn_Resnet_imagenet}), and at various points through training.

Our experiments reveal that, in the presence of BN, the largest eigenvalue of the Hessian, $\lambda_1(H)$ tend to not to deviate as much from the bulk. In contrast, in non-BN networks, the outliers grow much larger, and further from the bulk. To probe this behavior further we formalize the notion of an outlier with a metric:
\begin{align*}
\zeta(t) := \frac{\lambda_1(\nabla^2 \loss (\theta_t))}{\lambda_{K}(\nabla^2 \loss (\theta_t))}.
\end{align*}
This provides a scale-invariant measure of the presence of outliers in the spectrum. In particular, if $K - 1$ (as suggested by Sagun et al.\ \cite{sagun2016eigenvalues,sagun2017empirical} outliers are present in the spectrum, we expect $\zeta \gg 1$. Figure \ref{fig:outlier} plots $\zeta(t)$ throughout training. It is evident that \emph{relative} large eigenvalues appear in the spectrum. Normalization layer induces an odd dependency on parameter scale -- scaling the (batch normalized) weights leads to unchanged activations, and inversely scales the gradients. Obviously, we can not conclude that the problem is much easier! Thus, for studying the optimization performance of batch normalization, we must have at least a global scaling invariant quantity -- which $\zeta(t)$ is. In contrast, the analysis in \cite{santurkar2018does} varies wildly with scale\footnote{We have also tried normalizing individual weights matrices and filters, but this leads to blowup in some gradient components.}.

Informed by the experimental results in this section, we hypothesize a mechanistic explanation for why batch normalization speeds up optimization: it does so via suppression of outlier eigenvalues which slow down optimization.

\begin{figure}[ht]
\includegraphics[width=\textwidth]{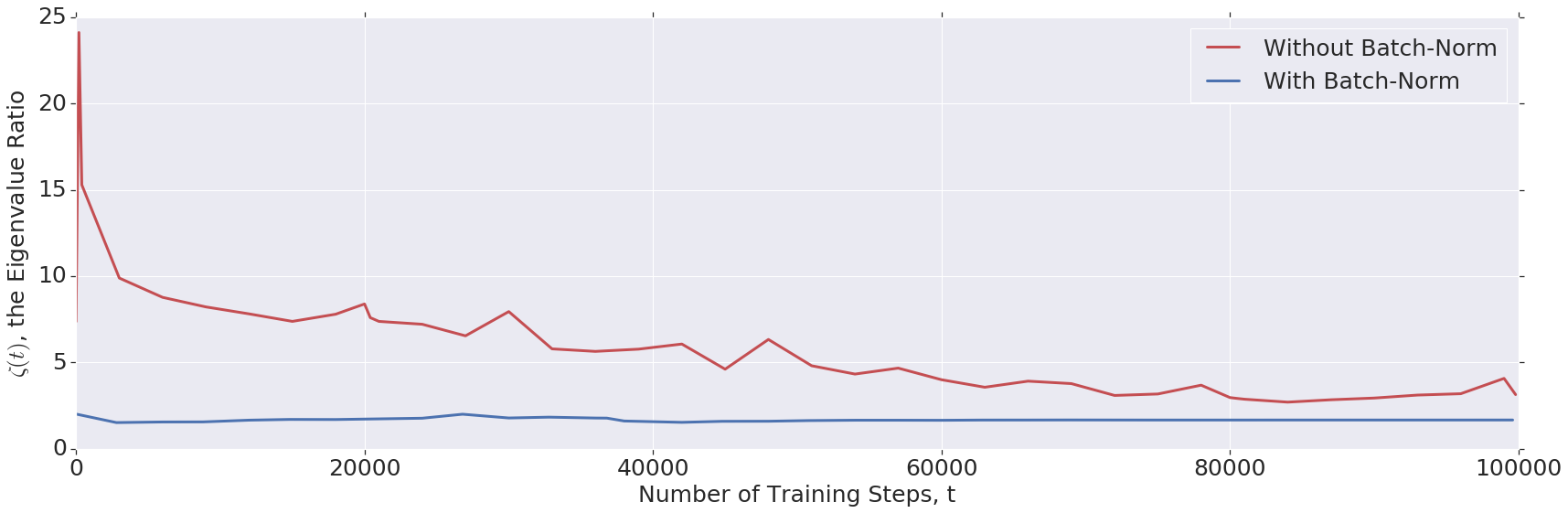}
\vspace{-0.5cm}
\caption{$\zeta(t)$ for Resnet-32 throughout training. The model without BN (red) consistently shows significantly higher eigenvalue fraction. \label{fig:outlier}}
\end{figure}

\begin{figure}[ht]
\includegraphics[width=\textwidth]{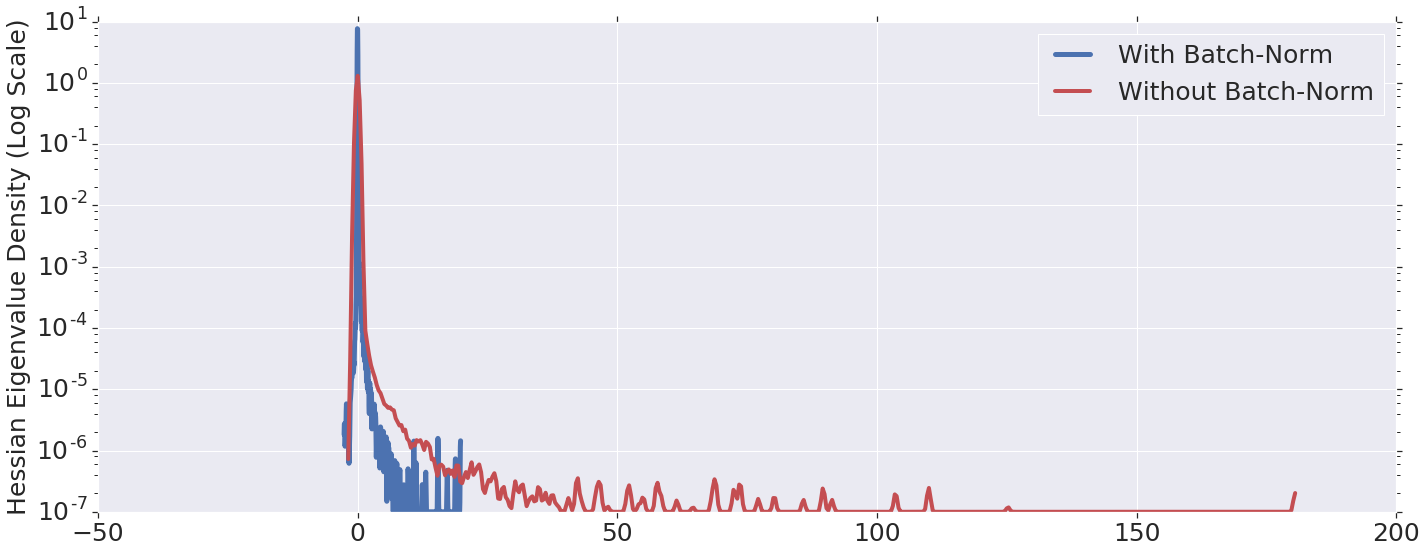}
\vspace{-0.5cm}
\caption{The eigenvalue comparison of the Hessian of Resnet-18 trained on ImageNet dataset. Model with BN is shown in blue and the model without BN in red. The Hessians are computed at the end of training. \label{fig:comparison_bn_Resnet_imagenet}}
\end{figure}

\begin{figure}[ht]
\includegraphics[width=\textwidth]{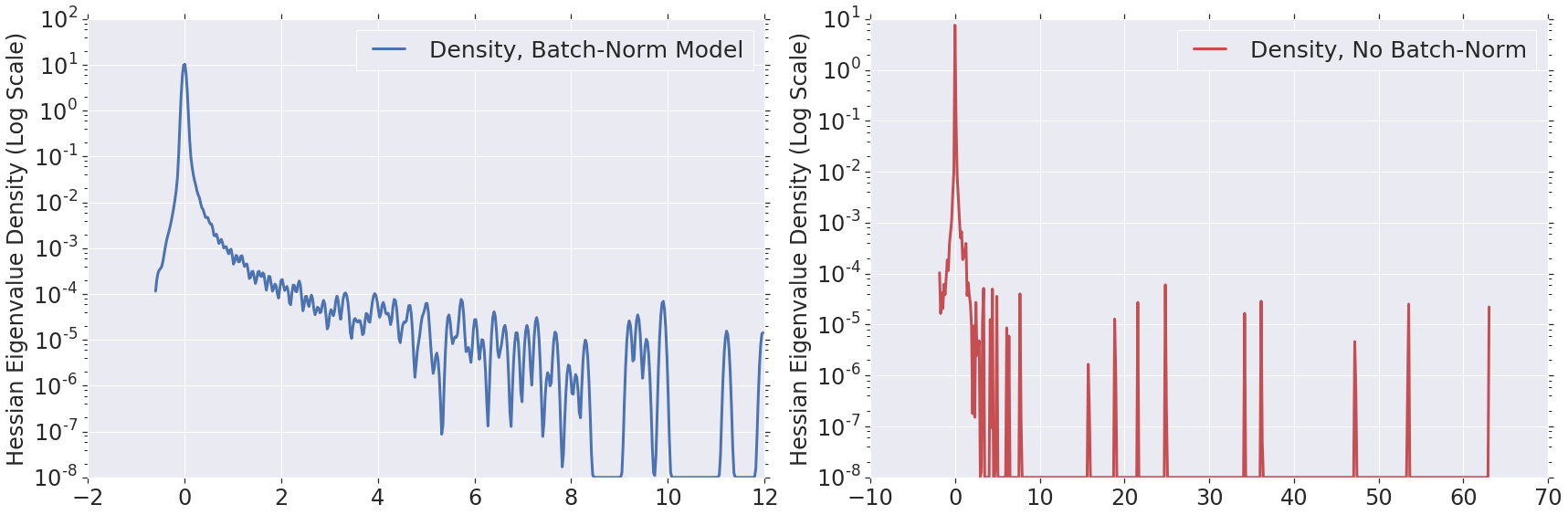}
\vspace{-0.5cm}
\caption{The eigenvalue comparison of the Hessian of the Resnet-32 model with BN (blue) and without BN (red). To allow comparison on the same plot, the densities have been normalized by their respective $10^{th}$ largest eigenvalue. The Hessians are computed after $48k$ steps of training. \label{fig:comparison_bn_Resnet}}
\end{figure}

\begin{figure}[ht]
\includegraphics[width=\textwidth]{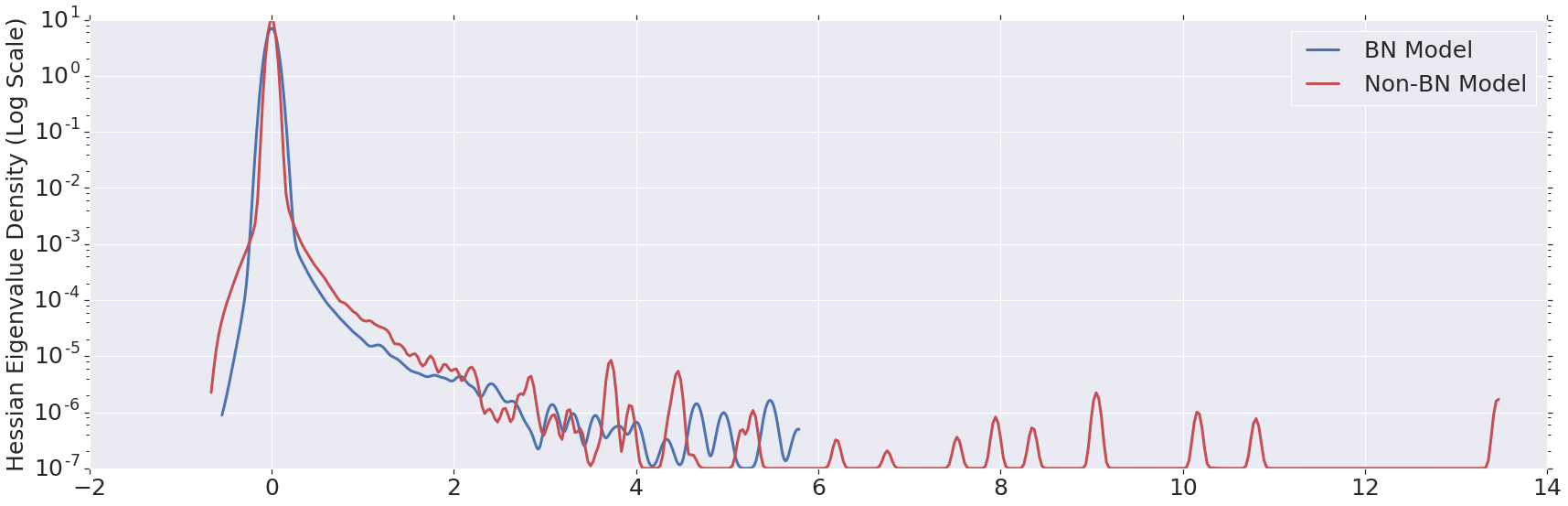}
\vspace{-0.5cm}
\caption{The eigenvalue comparison of the Hessian of the VGG network with BN (blue) and without BN (red). The Hessians are computed after $5058$ steps of training. \label{fig:comparison_bn_vgg}}
\end{figure}

\subsection{Mechanisms by which outliers slow optimization}
\label{sec:outliers}

In this section, we seek to answer the question ``\emph{Why} do outlier eigenvalues slow optimization?'' One answer to this question is obvious. Large $\lambda_1$ implies that one must use a very low learning rate; but this an incomplete explanation -- $\lambda_1$ has to be large \emph{with respect to the rest of the spectrum}. To make this explicit, consider a simple quadratic approximation to the loss around the optimum, $\theta^*$:
\begin{align}
\loss(\theta) \approx \loss(\theta^*) + \frac{1}{2} (\theta - \theta^*)^TH(\theta - \theta^*)
\end{align}
where without loss of generality, we assume $ H = diag(\lambda_1, \cdots, \lambda_n)$ with $\lambda_i >0$. We can easily show that when optimized with gradient descent with a learning rate $\eta < 2 / \lambda_i$ sufficiently small for convergence that in the eigenbasis, we have:
\begin{align}\label{eqn:quadratic_convergence}
|\hat{\theta}_t - \theta^*|_i \leq \bigg\vert 1 - \frac{2\lambda_i}{\lambda_1} \bigg\vert^t |\hat{\theta}_0 - \theta^*|_i
\end{align}
For all directions where $\lambda_i$ is small with respect to $\lambda_1$, we expect convergence to be slow. One might hope that these small $\lambda_i$ do not contribute significantly to the loss; unfortunately, when we measure this in a Resnet-32 with no batch normalization, a small ball around 0 accounts for almost 50\% of the total $L_1$ energy of the Hessian eigenvalues for a converged model (the $L_1$ reflects the loss function $\sum_i \lambda_i (\theta - \theta^{\ast})_i^2$). Thus to achieve successful optimization, we are forced to optimize these slowly converging directions\footnote{While the loss function in deep nets is not quadratic, the intuition that the result above provides is still valid in practice.}.

A second, more pernicious reason lies in the interaction between the large eigenvalues of the Hessian and the stochastic gradients. Define the covariance of the (stochastic) gradients at time $t$ to be 
\begin{align}
\Sigma(t) \equiv \frac{1}{N} \sum_{i = 1}^N \nabla \loss_i \nabla \loss_i^T.
\end{align}
The eigenvalue density of $\Sigma$ characterizes how the energy of the (mini-batch) gradients is distributed (the tools of Section \ref{sec:tools} apply just as well here). As with the Hessian, we observe that in non-BN networks the spectrum of $\Sigma$ has outlier eigenvalues (Figure \ref{fig:covariance}). Throughout the optimization, we observe that almost all of the gradient energy is concentrated in these outlier subspaces (Figure \ref{fig:gradient_figure}), reproducing an observation of Gur-Ari et al.\ \cite{gur2018gradient}\footnote{In addition, we numerically verify that the outlier subspaces of $H$ and $\Sigma$ mostly coincide: throughout the optimization, for a Resnet-32, $99\%$ of the energy of the outlier Hessian eigenvectors lie in the outlier subspace of $\Sigma(t)$.}. We observe that when BN is introduced in the model, this concentration subsides substantially.

\begin{figure}[h]
\includegraphics[width=\textwidth]{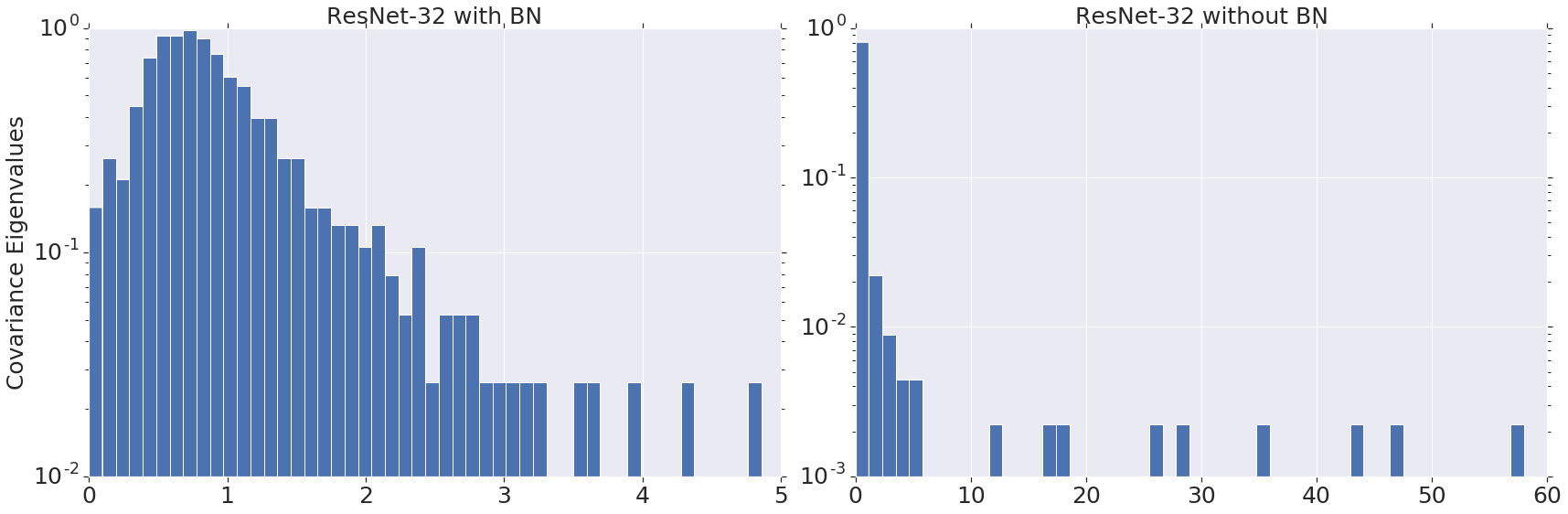}
\vspace{-0.5cm}
\caption{The histogram of the eigenvalues of $\Sigma$ for a Resnet-32 with (left) and without (right) BN after $9k$ training steps. In no BN case, almost $99\%$ of the energy is in the top few subspaces. For easier comparison, the distributions are normalized to have the same mean. \label{fig:covariance}}
\end{figure}

\begin{figure}[h]
\includegraphics[width=\textwidth]{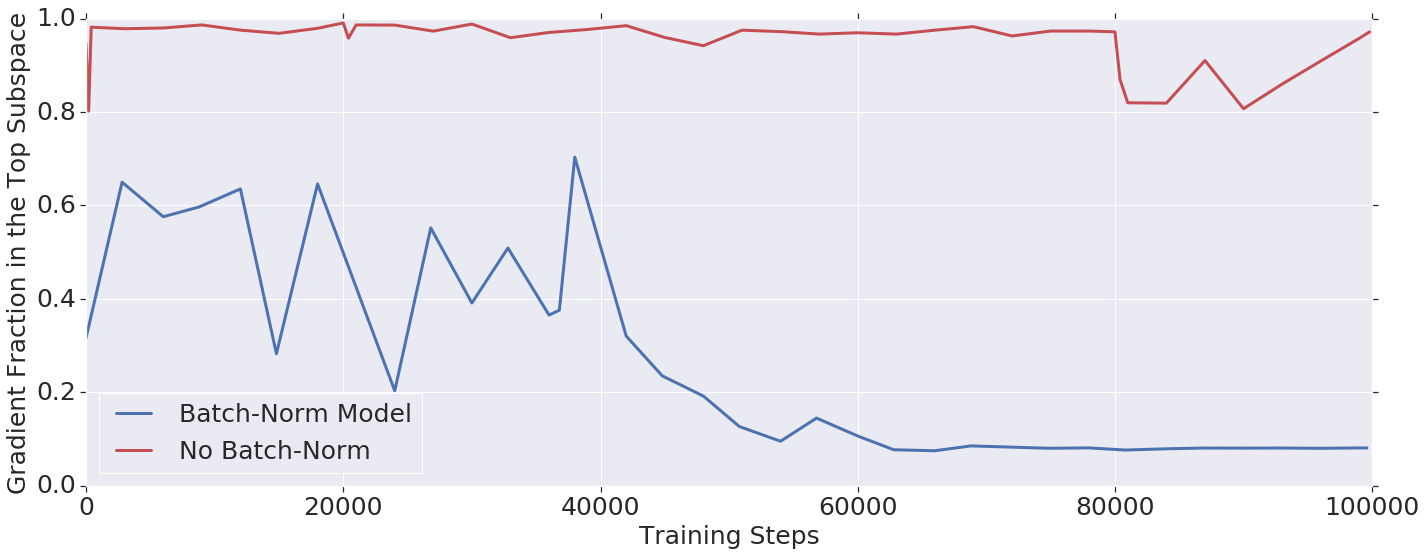}
\vspace{-0.5cm}
\caption{$\frac{\Vert P \nabla \loss(\hat{\theta}_i) \Vert_2^2}{\Vert \nabla \loss\hat{\theta}_i \Vert_2^2}$ for a Resnet-32. Here $P$ is the projection operator to the subspace spanned by the $10$ most dominant eigenvectors of $\nabla^2 \loss (\hat{\theta}_i)$. Almost all the variance of the gradient of the non-BN model is in this subspace. \label{fig:gradient_figure}}
\end{figure}

Since almost all of the gradient energy is in the very few outlier directions, the projection of the gradient in the complement of this subspace is minuscule. Thus, most gradient updates do not optimize the model in the flatter directions of the loss. As argued earlier, a significant portion of the loss comes from these flatter directions and a large fraction of the path towards the optimum lies in these subspaces. The fact that the gradient vanishes in these directions forces the training to be very slow. 

Stated differently, the argument above suggest that, in non-BN networks, the gradient is uninformative for optimization, i.e., moving towards the (negative) gradient hardly takes us closer to the optimum $\theta^*$. To support this argument, we plot the normalized inner product between the path towards the optimum, $\theta^* - \hat{\theta}_t$, \footnote{We use the parameter at the end of the training as a surrogate for $\theta^*$.} and the gradients, $\nabla \loss(\hat{\theta}_t)$, throughout the training trajectory (Figure \ref{fig:GradInf}). The figure suggests that the direction given by the gradient is almost orthogonal to the path towards the optimum. Moreover, the plot suggests that in BN networks, where the gradient is less concentrated in the high-curvature directions, the situation is significantly better.

In Appendix \ref{app:gradient_capture}, we study the relationship of the Hessian outliers with the concentration of the gradient phenomenon in a simple stochastic quadratic model. We show that when the model is optimized via stochastic gradients, outliers in the Hessian spectrum over-influence the gradient and cause it to concentrate in their direction. As argued above, gradient concentration is detrimental to the optimization process. Therefore, this result suggests yet another way in which outlier eigenvalues in $H$ disrupt training. 


\begin{figure}[h]
\includegraphics[width=\textwidth]{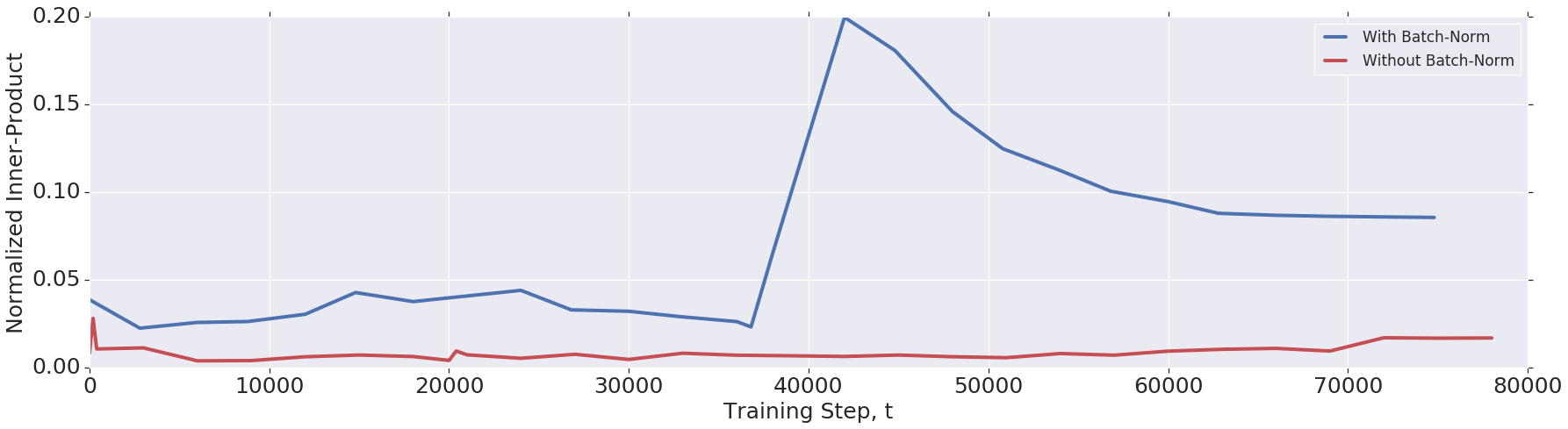}
\vspace{-0.5cm}
\caption{Normalized inner product between $\nabla \loss(\theta_t)$ and $\theta_t - \theta^*$ throughout the optimization for a Resnet-32 model. \label{fig:GradInf}}
\end{figure}

\subsection{Testing our hypothesis}
\label{sec:testing_hypothesis}
Our hypothesis that batch norm suppresses outliers, and hence speeds up training, is simple enough to allow us to make predictions based on it. The original batch normalization paper \cite{ioffe2015batch} observed that the normalization parameters of BN, $\sigma_B$ and $\mu_B$, have to be computed (and back-propagated through) using the mini-batch. If $\sigma_B, \mu_B$ are computed using the complete dataset, the training becomes slow and unstable. Therefore, we postulate that when $\sigma_B$ and $\mu_B$ are calculated from the population (i.e. full-batch) statistics, the outliers persist in the spectrum. 

To test our prediction, we train a Resnet-32 on Cifar-10 once using mini-batch normalization constants (denoted by mini-batch-BN network), and once using full-batch normalization constants (denoted by full-batch-BN network). The model trained with full-batch statistics trains much slower (Appendix \ref{app:population_bn}). Figure \ref{fig:population_statistics_train} compares the spectrum of the two networks in the early stages of the training (the behavior is the same during the rest of training). The plot suggests strong outliers are present in the spectrum with full-batch-BN. This observation supports our hypothesis. Moreover, we observe that the magnitude of the largest eigenvalue of the Hessian in between the two models is roughly the same throughout the training. Given that full-batch-BN network trains much more slowly, this observation shows that analyses based on the top eigenvalue of the Hessian do not provide the full-picture of the optimization hardness.

\begin{figure}[h]
\includegraphics[width=\textwidth]{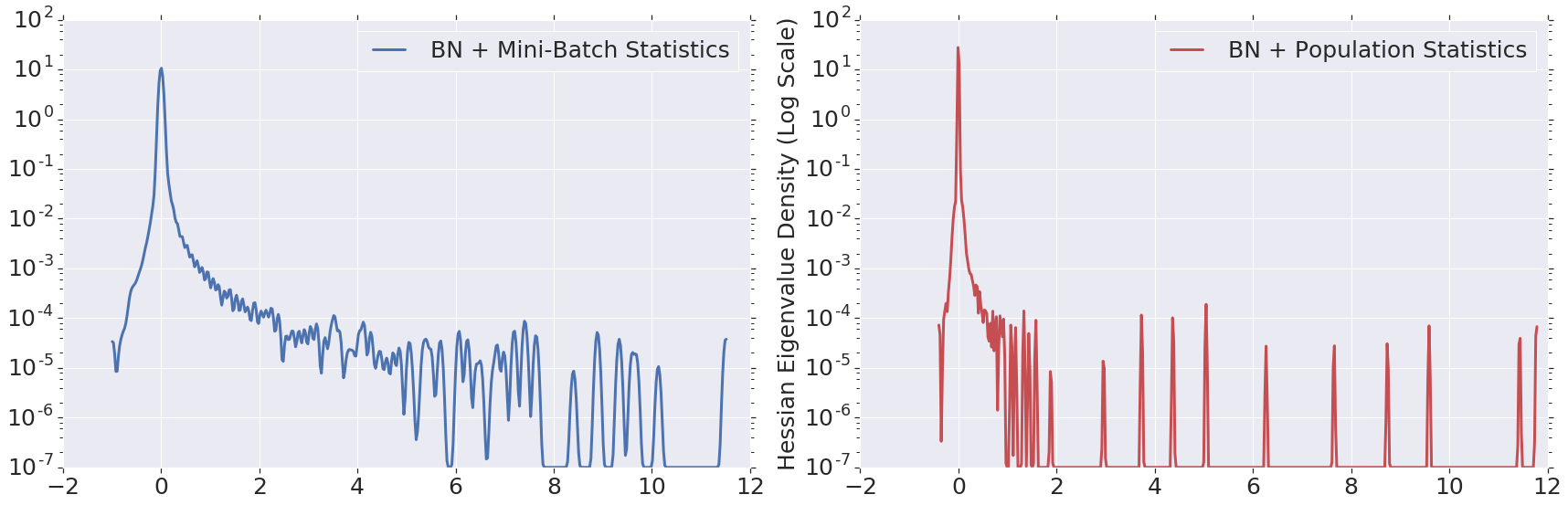}
\vspace{-0.5cm}
\caption{The Hessian spectrum for a Resnet-32 after $6k$ steps. The network on the left is trained with BN and mini-batch statistics. The network on the right is trained with population statistics. \label{fig:population_statistics_train}}
\end{figure}

\section{Conclusion}

We presented tools from advanced numerical analysis that allow for computing the spectrum of the Hessian of deep neural networks in an extremely accurate and scalable manner. We believe this tool is valuable for the research community as it gives a comprehensive view of the local geometry of the loss. This information can be used to further our understanding of neural networks.

We used this toolbox to study how the loss landscape locally evolves throughout the optimization. We uncovered surprising phenomena, some of which run contrary to the widely held beliefs in the machine learning community. In addition, we provided simple and clear answers to how batch-normalization speeds up training. We believe that BN is only one of the many architecture choices that can be studied using our framework. Studying these other architecture choices can be an interesting avenue for future research.

\bibliography{citations}
\bibliographystyle{plain}

\newpage
\clearpage
\cleardoublepage
\appendix
\newpage
\section{Concentration of Quadratic Forms}
\label{app:more_concentration}
The following lemma is one result on the concentration of quadratic forms:
\begin{lemma}[Concentration of Quadratic Forms, \cite{bellec2014concentration}] Let $\zeta \sim N(0, \sigma^2 I_n)$. Let $A\in \R^{n\times n}$ be any matrix. Then, $\forall x > 0$,
\label{lem:concentration}
\begin{align} 
P(\zeta^T A \zeta - \E{\zeta^T A \zeta} > 2\sigma^2 \Vert A \Vert_F \sqrt{x} + 2\sigma^2 \Vert A\Vert_2 x) \nonumber \leq \exp(-x).
\end{align}
\end{lemma}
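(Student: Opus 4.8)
The plan is to reduce the Gaussian quadratic form to a weighted sum of independent $\chi^2_1$ variables and then control its upper tail with a Chernoff/moment-generating-function argument, landing exactly on the sub-gamma (Bernstein-type) constants in the statement.

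First I would reduce to symmetric $A$. Writing $A = A_{\mathrm{sym}} + A_{\mathrm{anti}}$ with $A_{\mathrm{sym}} = (A + A^T)/2$, the antisymmetric part contributes nothing since $\zeta^T A_{\mathrm{anti}} \zeta = 0$ for every $\zeta$; hence $\zeta^T A \zeta = \zeta^T A_{\mathrm{sym}} \zeta$ and $\E{\zeta^T A \zeta}$ is unchanged. Because $\Vert A_{\mathrm{sym}}\Vert_F \le \Vert A\Vert_F$ and $\Vert A_{\mathrm{sym}}\Vert_2 \le \Vert A\Vert_2$, any bound proved for symmetric matrices with their own norms immediately implies the claim for general $A$. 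With $A$ symmetric I would diagonalize $A = U\,\mathrm{diag}(d_1,\dots,d_n)\,U^T$; since $\zeta \sim \normaldist{0}{\sigma^2 I_n}$ is rotationally invariant, $\xi := U^T\zeta$ has the same law, so $\zeta^T A \zeta = \sum_i d_i \xi_i^2 = \sigma^2 \sum_i d_i g_i^2$ with $g_i$ i.i.d.\ $\normaldist{0}{1}$. This identifies the centered quadratic form with $\sigma^2 W$, where $W = \sum_i d_i(g_i^2 - 1)$, and records the two quantities that will appear in the bound: $\sum_i d_i^2 = \Vert A\Vert_F^2$ and $\max_i|d_i| = \Vert A\Vert_2$.

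The core of the argument is a bound on the cumulant generating function of $W$. By independence and the one-dimensional identity $\E{\expn{\lambda d_i(g_i^2-1)}} = \expn{-\lambda d_i}/\sqrt{1 - 2\lambda d_i}$, valid for $2\lambda d_i < 1$,
\[
\log \E{\expn{\lambda W}} = \sum_{i=1}^n \Big(-\lambda d_i - \tfrac{1}{2}\log(1 - 2\lambda d_i)\Big).
\]
The key elementary estimate is that each summand is at most $(\lambda d_i)^2/(1 - 2\lambda\Vert A\Vert_2)$ whenever $0 \le \lambda < 1/(2\Vert A\Vert_2)$; this follows from a short calculus check on $w \mapsto -w/2 - \tfrac12\log(1-w)$ that covers both signs of $d_i$ (for $d_i \le 0$ the summand is already $\le (\lambda d_i)^2$). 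Summing gives the sub-gamma bound $\log\E{\expn{\lambda W}} \le \lambda^2 \Vert A\Vert_F^2/(1 - 2\Vert A\Vert_2\lambda)$, i.e.\ variance factor $\nu = 2\Vert A\Vert_F^2$ and scale $c = 2\Vert A\Vert_2$.

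Finally I would invoke the standard Chernoff optimization for a sub-gamma variable: $P(W > u) \le \inf_{0\le\lambda<1/c}\expn{-\lambda u + \lambda^2(\nu/2)/(1-c\lambda)}$, which is exactly the Bernstein regime and yields $P(W > \sqrt{2\nu x} + cx) \le \expn{-x}$ for every $x>0$. Substituting $\nu$ and $c$ gives $\sqrt{2\nu x} + cx = 2\Vert A\Vert_F\sqrt{x} + 2\Vert A\Vert_2 x$, and rescaling by $\sigma^2$ (the deviation being $\sigma^2 W$) produces precisely $2\sigma^2\Vert A\Vert_F\sqrt{x} + 2\sigma^2\Vert A\Vert_2 x$. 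The main obstacle is the per-coordinate cumulant estimate and its uniformity in the sign of $d_i$: obtaining the denominator $1 - 2\lambda\Vert A\Vert_2$ rather than a coordinate-dependent $1 - 2\lambda d_i$ is what makes the sum collapse into the clean sub-gamma form and pins down the constant $2$ in both terms. The closing Chernoff step is routine once the sub-gamma form is established.
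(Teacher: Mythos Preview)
Your argument is correct: the reduction to symmetric $A$, the diagonalization to a weighted $\chi^2$ sum, the per-coordinate cumulant estimate $-\lambda d_i - \tfrac12\log(1-2\lambda d_i) \le (\lambda d_i)^2/(1-2\lambda\Vert A\Vert_2)$, and the final sub-gamma Chernoff step all go through with the stated constants. This is essentially the Laurent--Massart proof.

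There is nothing to compare against, however: the paper does not prove this lemma. It is stated with a citation to \cite{bellec2014concentration} and used as a black box to establish Claim~\ref{cor:phi_concentrate}. So your write-up supplies a proof where the paper simply quotes one; the approach you give is the standard one underlying the cited reference.
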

We are now ready to prove Claim \ref{cor:phi_concentrate}.
\begin{proof}
Consider the block-diagonal matrix $A = \oplus_{i=1}^k  f(H; t, \sigma^2)$. Then, $\widehat{\phi}_{\sigma}(t) = w^T A w$ where $w$ is the concatenation of the $k$ realizations of $v$ divided by $\sqrt{k}$. Now observe that $w$ is i.i.d $\mathcal{N}(0, \frac{1}{kn})$. Therefore, by Lemma \ref{lem:concentration}, 
\begin{equation*}
P\bigg(|\phis(t) - \widehat{\phi}_{\sigma}(t)| > \frac{2\Vert A \Vert_F}{k n} \sqrt{x} +  \frac{2\Vert A \Vert_2}{k n} x \bigg) \leq 2\exp(-x).
\end{equation*}
Now observe that $\Vert A \Vert_F = \sqrt{k}\Vert f(H; t, \sigma^2) \Vert_F$ and $\Vert A \Vert_2 = \Vert f(H; t, \sigma^2)\Vert_2$. Therefore, we get 
\begin{equation}\label{eq:simple_format}
P\bigg(|\phis(t) - \widehat{\phi}_{\sigma}(t)| > \frac{2a}{n\sqrt{k}} \sqrt{x} +  \frac{2b}{k n} x \bigg) \leq 2\exp(-x).
\end{equation}

From \eqref{eq:simple_format} is clear that the bound deteriorates as $a$ and $b$ increase. Since $f(\cdot)$ is the Gaussian density, we know $b \leq \frac{1}{\sqrt{2 \pi} \sigma}$ and $a \leq \sqrt{n} b$. Substituting these worst case scenario values in \eqref{eq:simple_format}, we get
\begin{equation}
P\bigg(|\phis(t) - \widehat{\phi}_{\sigma}(t)| > \sqrt{\frac{2}{\pi \sigma^2}} (\sqrt{\frac{x}{nk}} + \frac{x}{nk}) \bigg) \leq 2\exp(-x).
\end{equation}
This proves our assertion.
\end{proof}

Figure \ref{fig:tail_bound} shows how $\epsilon(x)$ changes with respect to probability bound $2 \exp(-x)$ in the worst case bound $\eqref{eq:worst_case}$. We can see that even with modest values of $k$, we can achieve tight bounds on $\epsilon$ with high probability. 

\begin{figure} 
\includegraphics[width=\textwidth]{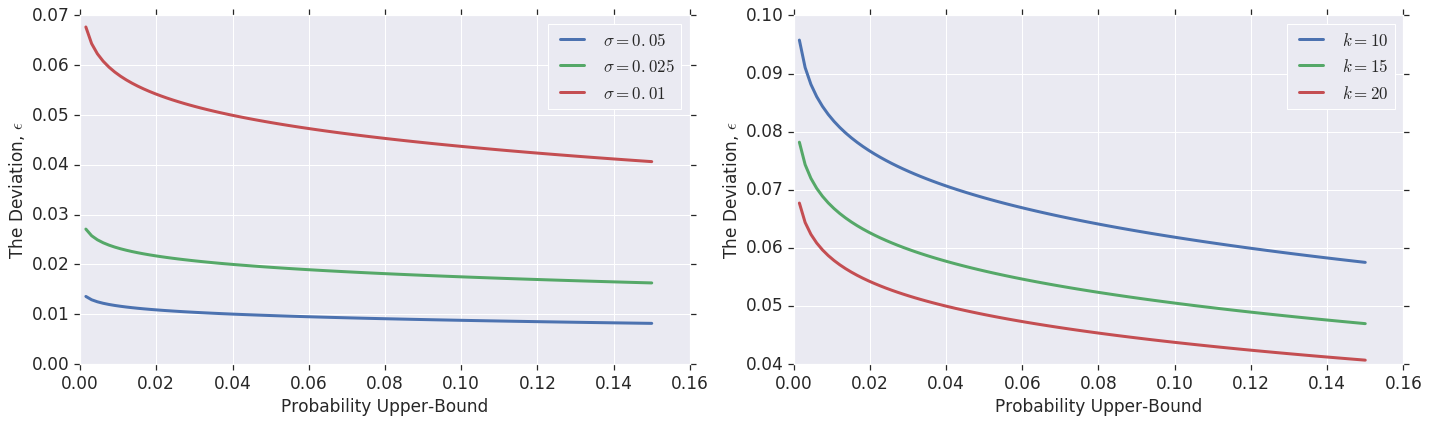}
\caption{Examination of the worst-case tail bound for a network with $n = 5\times 10^5$ parameters. Left figure: we set $k=20$ and change the kernel parameter $\sigma$. Right figure: we set $\sigma =0.01$ and change $k$.\label{fig:tail_bound}}
\end{figure}

\section{Numerical Verification on Small Models}
\label{app:verify}
Figure \ref{fig:nodes} shows how fast $\phisv$ converges to $\widehat{\phi}^{(v)}(t)$ as $m$ increases in terms of total variation ($L_1$) distance. 

\begin{figure}[h]
\includegraphics[width=\textwidth]{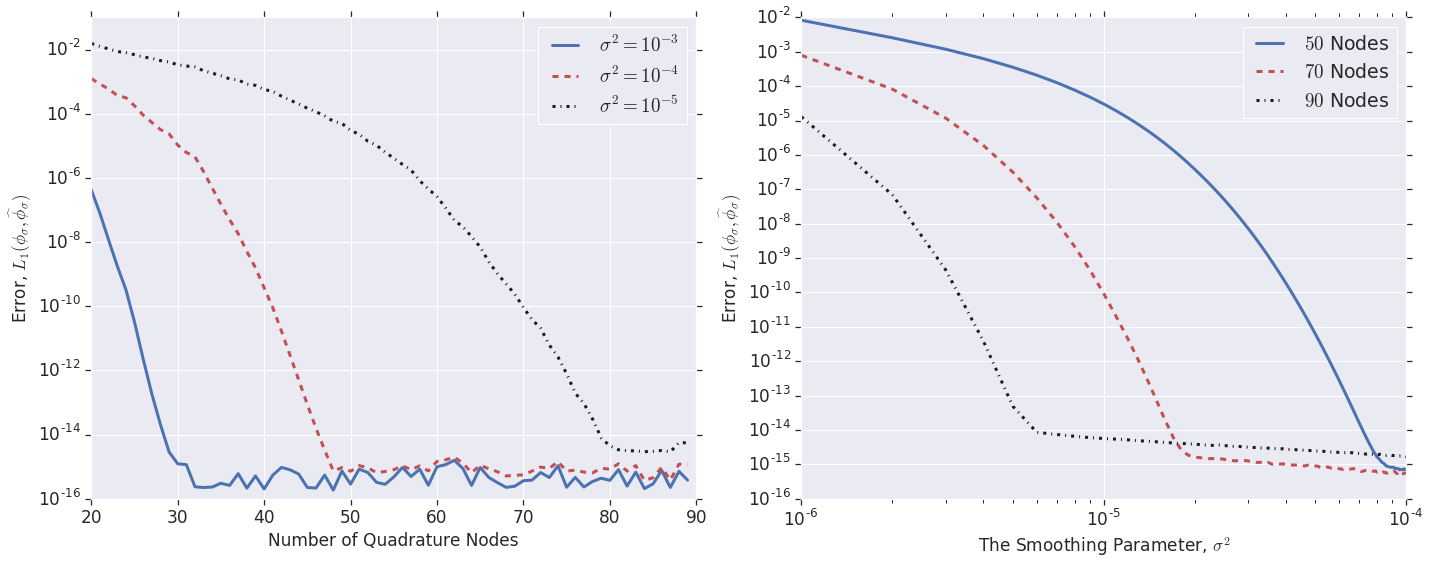}
\vspace{-0.5cm}
\caption{The left plot shows the accuracy of the Gaussian quadrature approximate as the number of nodes increases. A degree $80$ approximation achieves double-precision accuracy of $10^{-14}$. The right plot shows how the accuracy changes as the kernel width, $\sigma^2$, increases. For our large-scale experiments, we use $\sigma^2 = 10^{-5}$ and $90$ quadrature nodes. \label{fig:nodes}}
\end{figure}

Before going to large scale experiments, we empirically demonstrate the accuracy of our proposed framework on a small model where the Hessian eigenvalues can be computed exactly. Let's consider a feed-forward neural network trained on $1000$ MNIST examples with $1$ hidden layer of size $20$, corresponding to $n=15910$ parameters. The Hessian of networks of this type were studied earlier in \cite{sagun2017empirical} where it was shown that, after training, the spectrum consists of a bulk near zero and a few outlier eigenvalues. In our example, the range $[-0.2, 0.4]$ roughly corresponds to the bulk and $(0.4, 10)$ corresponds to the outlier eigenvalues. Figures \ref{fig:verification} and \ref{fig:verification_outliers} compare our estimates with the exact smoothed density on each of these intervals. Our results show that with a modest number of quadrature points (90 here) we are able to approximate the density extremely well. 
Our proposed framework achieves $L_1(\phis, \widehat{\phi}_{\sigma}) \approx 0.0012$ which corresponds to an extremely accurate solution. As demonstrated in Figure \ref{fig:verification_outliers}, our estimator detects the presence of outlier eigenvalues. Therefore, the information at the edges of $\phis$ is also recovered.

\begin{figure}[h]
\includegraphics[width=\textwidth]{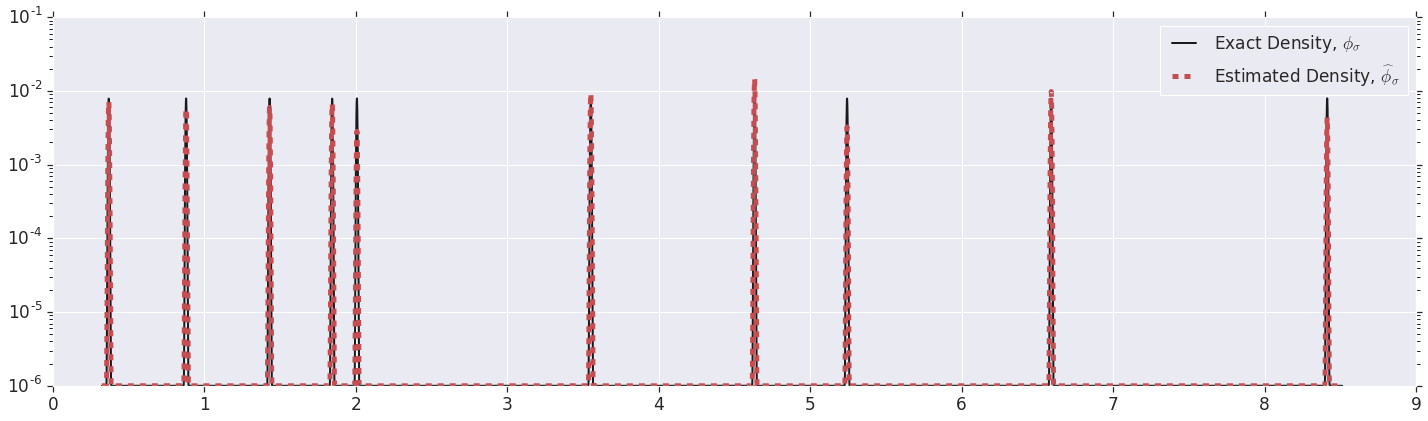}
\vspace{-0.5cm}
\caption{Comparison of the estimated smoothed density (dashed) and the exact smoothed density (solid) in the interval $[0.4, +\inf)$. We use $\sigma^2 = 10^{-5}, k=10$ and degree $90$ quadrature. \label{fig:verification_outliers}}
\end{figure}

\section{Implementation Details}
\label{app:implementation}

The implementation of Algorithm \ref{meta_alg} for a single machine is straightforward and can be done in a few lines of code. Scaling it to run on a 27 million parameter Inception V3 \cite{szegedy2016rethinking} on ImageNet (where we performed our largest scale experiments) requires a significant engineering effort.

The major component is a distributed Lanczos algorithm. Because modern deep learning models and datasets are so large, it is important to be able to run Hessian-vector products in parallel across multiple machines. At each iteration of the Lanczos algorithm, we need to compute a Hessian-vector product on the entire dataset. To do so, we split the data across all our workers (each one of which is endowed with one or more GPUs), each worker computes mini-batch Hessian-vector products, and these products are summed globally in an accumulator. Once worker $i$ is done on its partition of the data, it signals via semaphore $i$ to the chief that it is done. When all workers are done, the chief computes completes the Lanczos iteration by applying a QR orthogonalization step to total Hessian-vector product. When the chief is done, it writes the result to shared memory and raises all the semaphores to signal to the workers to start on a new iteration.

For the Hessian-vector products, we are careful to eliminate all non-determinism from the computation, including potential subsampling from the data, shuffle order (this affects e.g., batch normalization), random number seeds for dropout and data augmentation, parallel threads consuming data elements for summaries etc. Otherwise, it is  unclear what matrix the Lanczos iteration is actually using.

Although GPUs typically run in single precision, it is important to perform the Hessian-vector accumulation in double precision. Similarly, we run the orthogonalization in the Lanczos algorithm in double precision. TensorFlow variable updates are not atomic by default, so it is important to turn on locking, especially on the accumulators. TensorFlow lacks communication capability between workers, so the coordination via semaphores (untrainable tf.Variables) is crude but necessary.

For a CIFAR-10, on 10 Tesla P100 GPUs, it takes about an hour to compute 90 Lanczos iterations. For ImageNet, a Resnet-18 takes about 20 hours to run 90 Lanczos iterations. An Inception V3 takes far longer, at about 3 days, due to needing to use 2 GPUs per worker to fit the computation graph. We were unable to run any larger models due to an unexpected OOM bugs in TensorFlow. It should be straightforward to obtain a 50-100\% speedup -- we use the default TensorFlow parameter server setup, and one could easily reduce wasteful network transfers of model parameters  from parameter servers for every mini-batch, and conversely from transferring every mini-batch Hessian-vector product back to the parameter servers. We made no attempt to optimize these variable placement issues.

For the largest models, TensorFlow graph optimizations via Grappler can dramatically increase peak GPU memory usage, and we found it necessary to manage these carefully.

\section{Comparison with Other Spectrum Estimation Methods}
\label{app:comparison}
There is an extensive literature on estimation of spectrum of large matrices. A large fraction of the algorithms in this literature relay on explicit polynomial approximations to $f$. To be more specific, these methods approximate $f(\cdot, t, \sigma^2)$ with a polynomial of degree $m$, $g_{m}(\cdot)$. In step I of Algorithm \ref{meta_alg}, $\phisv(t)$ is approximated by
\begin{align}\label{eqn:polynomial}
\phip^{(v)}(t) := \sum_{i = 1}^n \beta_i^2 g_m(\lambda_i).
\end{align}
If $g_m(\cdot)$ is a good approximation for $f(\cdot; t, \sigma^2)$, we expect $\phip^{(v)}(t) \approx \phisv(t)$.

Since $g_m$ is a polynomial, \eqref{eqn:polynomial} can be exactly evaluated as soon as 
\begin{align}
\mu_j^{(v)}\equiv \sum_{i=1}^n \beta_i^2 \lambda_i^j, \qquad 1 \leq j \leq m
\end{align}
are known. Note that by definition, 
\begin{align*}
\mu_j^{(v)} = \sum_{i=1}^n (v^T q_i)^2 \lambda_i^j = v^T Q\Lambda^{j}Q^Tv = v^T H^j v 
\end{align*}
Therefore, if done carefully, $\{\mu_j^{(v)}\}_{j=1}^m$ can be computed by performing $m$ Hessian-vector products in total. Hence, by performing $km$ Hessian-vector products one can run Algorithm \ref{meta_alg} with $k$ different realizations of $v$.

This approximation framework is arguably simpler than Gaussian quadrature method as it does not have to cope with complexities of Lanczos algorithm. Therefore, it is has been extensively used in the numerical linear algebra literature. The polynomial approximation step is usually done via Chebyshev polynomials. This class of polynomials enjoy strong computational and theoretical properties that make them suitable for approximating smooth functions. For more details on Chebyshev polynomials we refer the reader to \cite{gil2007numerical}.

Recently, there has been a proposal to use Chebyshev approximation for estimating the Hessian spectrum for deep networks \cite{adams2018estimating}. For completeness, we compare the performance of this algorithm with the Gaussian quadrature rule on the feed-forward network defined earlier.

Figure \ref{fig:Chebyshev} shows the performance of the Chebyshev method in approximating $\phis(t)$. The hyper-parameters are selected such that the performance of the Chebyshev method in Figure \ref{fig:Chebyshev} is directly comparable with the performance of Gaussian quadrature in Figure \ref{fig:verification}. In particular, both approximations take the same amount of computation (as measured by the number of Hessian-vector products) and they both use the same kernel width ($\sigma^2 = 10^{-5}$). As the figure shows, the Chebyshev method utterly fails to provide a decent approximation to the spectrum. As it can be seen from the figure, almost all of the details of the spectrum are masked by the artifacts of the polynomial approximation. In general, we expect the Chebyshev method to require orders of magnitude more Hessian-vector products to match the accuracy of the Gaussian quadrature.

It is not a surprise that explicit polynomial approximation fails to provide a good solution. For small kernel widths, extremely high order polynomials are necessary to approximate the kernel well. Figure \ref{fig:ChebyshevKernel} shows how well Chebyshev polynomials approximate the kernel $f$ with $\sigma^2 = 10^{-5}$. The figure suggests that even with a $500$ degree approximation, there is a significant difference between the polynomial approximation and the exact kernel.  

\begin{figure}[h]
\includegraphics[width=\textwidth]{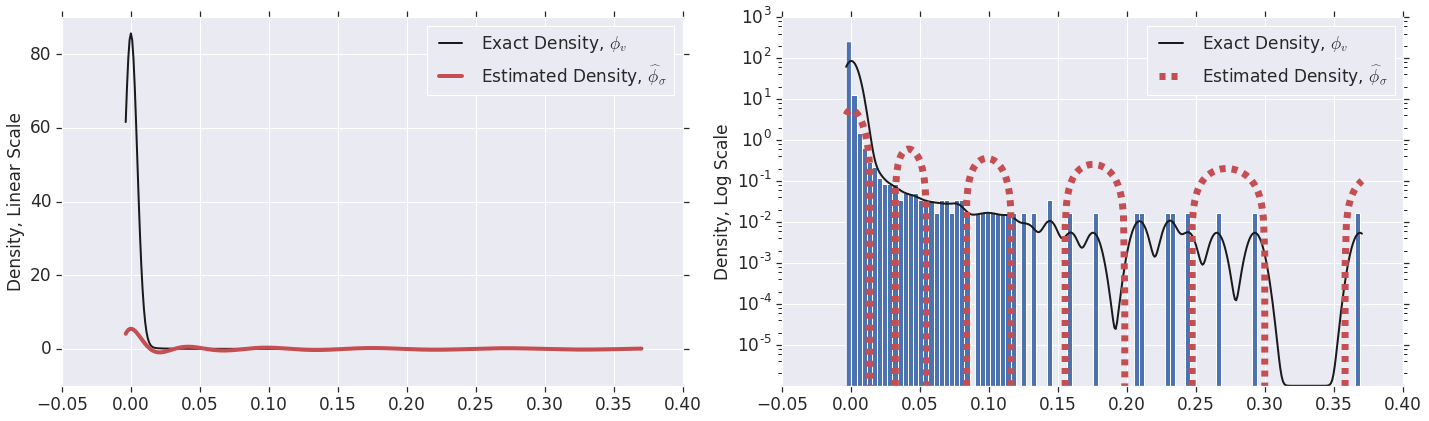}
\vspace{-0.5cm}
\caption{Estimated Hessian spectral density using Chebyshev approximation method for the feed-forward model. The left plot shows the densities in the linear scale and the right plot shows the densities in the log scale. Degree $90$ polynomial was used to estimate the density. $\sigma^2 = 10^{-5}$ was used as the kernel parameter. To factor out the effects of noise in moment estimation, exact eigenvalue moments were provided to the algorithm. \label{fig:Chebyshev}}
\end{figure}

\begin{figure}[h]
\includegraphics[width=\textwidth]{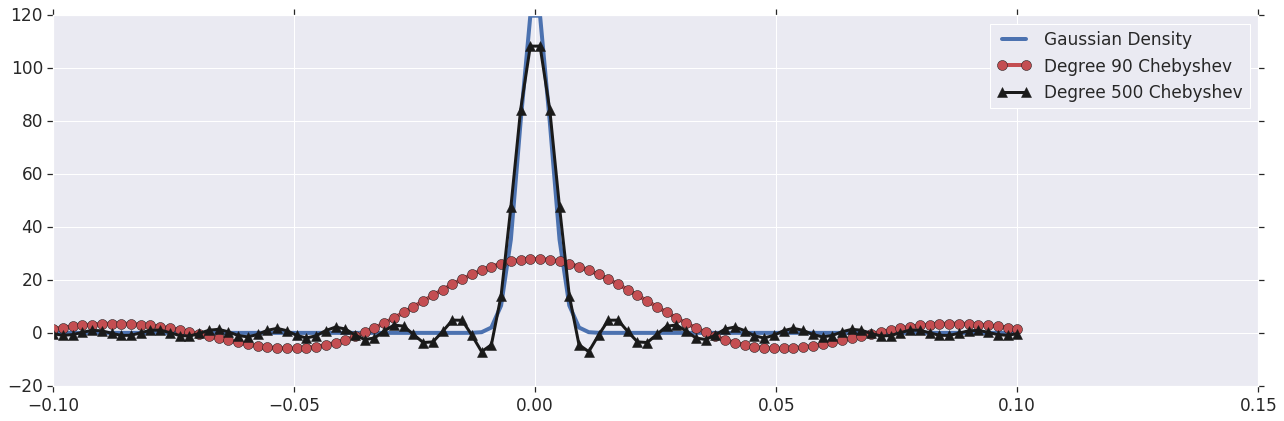}
\vspace{-0.5cm}
\caption{Demonstrating the quality of Chebyshev polynomial approximation to the Gaussian kernel with $\sigma^2 = 10^{-5}$. The plot suggests that approximations of order $500$ or more are necessary to achieve accurate results. Such high order approximations are statistically unstable and extremely computationally expensive. \label{fig:ChebyshevKernel}}
\end{figure}

\section{Gradient Concentration in the Quadratic Case}
\label{app:gradient_capture}

In this section, we theoretically show the phenomenon of gradient concentration on a simple quadratic loss function with stochastic gradient descent. The loss function is of the form 
$$\loss(\theta) = \frac{1}{2} (\theta-\theta^*)^T H (\theta-\theta^*),
$$ 
where the ordered (in decreasing order) eigenpairs of $H$ are $(\lambda_i, q_i), i = 1, \cdots, n$ (implies $H q_i = \lambda_i q_i$) and the iteration starts at $\theta_0 \sim \mathcal{N}(0, I_n)$. We model the stochastic loss (from which we compute the gradients for SGD) as 
$$\hat{\loss}(\theta) = \frac{1}{2} (\theta - \theta^* + z)^T H (\theta - \theta^* + z),
$$
where $z$ is a random variable such that $\E{z} = 0$ and $\E{zz^T} = \Noise$. In order to understand gradient concentration, we look at the alignment of individual SGD updates with individual eigenvectors of the Hessian. We are now ready to prove the following theorem.

\begin{theorem}\label{thm:quadratic}
 Consider a single gradient descent iteration, $\theta_{t+1} = \theta_t - \eta \nabla \hat{\loss}$ with a constant learning rate $\eta \approx c/\lambda_1$ for a constant $c < 1$. Then,
 \begin{align}
     \E{\left< q_i, (\theta_{t+1} - \theta_t) \right>^2} \to \alpha \cdot \left( \frac{\lambda_i}{\lambda_1} \right)^2 \cdot (q_i^T \Noise q_i)
 \end{align}
for some sufficiently large constant $\alpha$ as $t \to \infty$.
\end{theorem}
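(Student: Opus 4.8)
The plan is to exploit the fact that for this exactly quadratic loss the SGD recursion \emph{decouples} along the eigenbasis of $H$, so that each coordinate $\langle q_i,\cdot\rangle$ evolves according to an independent one–dimensional autoregressive process whose stationary second moment can be read off directly.

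First I would compute the stochastic gradient. Since $\hat\loss(\theta)=\tfrac12(\theta-\theta^*+z)^T H(\theta-\theta^*+z)$, we have $\nabla\hat\loss(\theta)=H(\theta-\theta^*+z)$, so one SGD step reads $\theta_{t+1}-\theta_t=-\eta H(\theta_t-\theta^*+z_t)$, where $z_t$ is the fresh mini-batch noise at iteration $t$, independent of $\theta_t$, with $\E{z_t}=0$ and $\E{z_tz_t^T}=\Noise$. Writing $a_i(t):=\langle q_i,\theta_t-\theta^*\rangle$ and using $Hq_i=\lambda_i q_i$ together with the symmetry of $H$, projection onto $q_i$ gives
\[
\langle q_i,\theta_{t+1}-\theta_t\rangle=-\eta\lambda_i\bigl(a_i(t)+\langle q_i,z_t\rangle\bigr).
\]
Squaring, taking expectations, and using independence of $z_t$ from $\theta_t$ together with $\E{z_t}=0$ to kill the cross term yields
\[
\E{\langle q_i,\theta_{t+1}-\theta_t\rangle^2}=\eta^2\lambda_i^2\Bigl(\E{a_i(t)^2}+q_i^T\Noise q_i\Bigr),
\]
so the whole problem reduces to the limit of $\E{a_i(t)^2}$.

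For that, applying the same projection to $\theta_{t+1}-\theta^*$ gives the scalar recursion $a_i(t+1)=(1-\eta\lambda_i)a_i(t)-\eta\lambda_i\langle q_i,z_t\rangle$. Because $\eta\approx c/\lambda_1$ with $c<1$ forces $0<\eta\lambda_i\le c<1$, the contraction factor satisfies $|1-\eta\lambda_i|<1$, so the (non-stationary) Gaussian initialization, which makes $a_i(0)$ Gaussian with variance $1$, washes out. Taking variances, with the fresh noise independent of $a_i(t)$,
\[
\E{a_i(t+1)^2}=(1-\eta\lambda_i)^2\E{a_i(t)^2}+\eta^2\lambda_i^2(q_i^T\Noise q_i),
\]
a geometric recursion whose fixed point is $\E{a_i(\infty)^2}=\eta\lambda_i(q_i^T\Noise q_i)/(2-\eta\lambda_i)$. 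Substituting back gives $\E{\langle q_i,\theta_{t+1}-\theta_t\rangle^2}\to \tfrac{2\eta^2\lambda_i^2}{2-\eta\lambda_i}(q_i^T\Noise q_i)$, and with $\eta=c/\lambda_1$ this equals $\tfrac{2c^2}{2-c\lambda_i/\lambda_1}\cdot(\lambda_i/\lambda_1)^2\cdot(q_i^T\Noise q_i)$, i.e.\ the claimed form with $\alpha=2c^2/(2-c\lambda_i/\lambda_1)$, which lies in $[c^2,\,2c^2/(2-c)]$ and may therefore be taken as a single uniform constant.

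The computations are routine; the only points requiring care are (i) the bookkeeping of the cross terms — each vanishing cross term relies on the mini-batch noise $z_t$ being mean-zero and independent of the current iterate $\theta_t$, which should be stated explicitly as a modeling assumption on the stochastic oracle; and (ii) upgrading ``bounded'' to ``convergent'', i.e.\ noting that $|1-\eta\lambda_i|<1$ makes the geometric recursion for $\E{a_i(t)^2}$ actually converge to its fixed point (so that the effect of $\theta_0\sim\mathcal N(0,I_n)$ genuinely decays), not merely remain bounded. A cosmetic caveat worth flagging is the mild $i$-dependence hidden in $\alpha$: since $\lambda_i/\lambda_1\in[0,1]$ it is harmless, but the statement should be read with $\alpha$ understood as a uniform bound rather than a universal constant.
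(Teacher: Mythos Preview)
Your proposal is correct and reaches exactly the same limiting expression $\tfrac{2\eta^2\lambda_i^2}{2-\eta\lambda_i}(q_i^T\Noise q_i)$ as the paper, via essentially the same mechanism: decoupling in the eigenbasis of $H$ and using independence of the fresh noise $z_t$ from $\theta_t$ to kill cross terms. The only organizational difference is that the paper unrolls the recursion explicitly to write $\theta_t$ as $(I_n-\eta H)^t\theta_0$ plus a sum over past noises and then sums the resulting geometric series, whereas you project first and work with the scalar second-moment recursion $\E{a_i(t+1)^2}=(1-\eta\lambda_i)^2\E{a_i(t)^2}+\eta^2\lambda_i^2(q_i^T\Noise q_i)$ and its fixed point; this is a tidier packaging of the same computation, not a different argument.
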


\begin{proof}
Each stochastic gradient step has the form
$\theta_t = \theta_{t-1} - \eta H (\theta_{t-1} + z_{t-1})$.
Expanding the recurrence induced by gradient step over $t$ steps, we can write
$$\theta_t = (I_n - \eta H)^t \theta_0 - \eta H \sum_{j=0}^{t-1} (I_n - \eta H)^j z_{t-j-1}.
$$

Therefore a single update $\theta_{t+1} - \theta_t = -\eta H (\theta_t + z_t)$ can be expanded as

\begin{align*}
\theta_{t+1} - \theta_t = - \eta H [(I_n - \eta H)^t \theta_0 
- \eta H \sum_{j=0}^{t-1} (I_n - \eta H)^j z_{t-j-1} + z_t]
\end{align*}

We can write the above equation as $ -\eta H ( T_1 + T_2)$, where
\begin{align*}
\begin{split}
T_1 &= (I_n - \eta H)^t \theta_0 \\
T_2 &= - \eta H \sum_{j=0}^{t-1} (I_n - \eta H)^j z_{t-j-1} + z_t
\end{split}
\end{align*}

Consider the dot product of this update with one of the eigenvectors $q_i$. Clearly from the form of the update $\E{\left<q_i, \theta_{t+1} - \theta_t \right>} \xrightarrow{t \rightarrow \infty} 0$. We now quantify the variance of the update in the direction of $q_i$. Using the identity $H q_i = \lambda_i q_i$, it is easy to see that

\begin{align*}
\begin{split}
q_i^T \eta H T_1 &= \eta \lambda_i (1 - \eta \lambda_i)^t q_i^T \theta_0 \\
q_i^T \eta H T_2 &= - \eta^2 \lambda_i^2 \sum_{j=0}^{t-1} (1 - \eta \lambda_i)^j q_i^T z_{t-j-1} + \eta \lambda_i q_i^T z_t
\end{split}
\end{align*}

Squaring the sum of the two terms above and taking expectations, only the squared terms survive. We write the 
\begin{align*}
\begin{split}
 \E{\left< q_i, (\theta_{t+1} - \theta_t) \right>^2} =  \eta^2 \lambda_i^2 (1 - \eta \lambda_i)^{2t} 
 +  \left( \eta^4 \lambda_i^4 \sum_{j=0}^{t-1} (1 - \eta \lambda_i)^{2j} + \eta^2 \lambda_i^2 \right) \cdot (q_i^T \Noise q_i)
\end{split}
\end{align*}

As $t \to \infty$, the first term above goes to 0. This suggests that in the absence of noise in the gradients there is no reason to expect any alignment of the gradient updates with the eigenvectors of the Hessian. However, the second term (after some algebraic simplification) can be written as 
$$\frac{2 \eta^2 \lambda_i^2}{2 - \eta \lambda_i} \cdot (q_i^T \Noise q_i).
$$
Parameterizing $\eta = c/ \lambda_1$ completes the proof.


\end{proof}
A couple of observations are appropriate here. We can see that as the separation of eigenvalues increases, gradient updates align quadratically with the top eigenspaces. By manipulating the alignment of $\Noise$ with the top eigenspaces of $H$, we can dramatically change the concentration of updates. For example, if $\Noise$ was similar to $H$, the alignment with the top eigenspaces can be enhanced. If $\Noise$ was similar to $H^{-1}$, the alignment with the top eigenspaces can be diminished. We have seen that, even in practice, if we could control the noise in the gradients, we can hamper or improve optimization in significant ways.

\section{Experimental Details}
\label{app:details}

On CIFAR-10, our models of interest are:
\begin{description}
\item[Resnet-32:] This model is a standard Resnet-32 with 460k parameters. We train with SGD and a batch size of 128, and decay the learning from 0.1 by factors of 10 at step 40k, 60k, 80k. This attains a validation of 92\% with data augmentation (and around 85\% without)
\item[VGG-11:] This model is a slightly modified VGG-11 architecture. Instead of the enormous final fully connected layers, we are able to reduce these to 256 neurons with only a little degradation in validation accuracy (81\% vs 83\% with a 2048 size fully connected layers). We train with a constant SGD learning rate of 0.1, and a batch size of 128. This model has over 10 million parameters.
\end{description}
To ensure that our models have a finite local minimum, we introduce a small label smoothing of 0.1. This does not affect the validation accuracy; the only visible effect is that the lowest attained cross entropy loss is the entropy 0.509.

On ImageNet, our primary model of interest is Resnet-18. We use the model in the official TensorFlow Models repository \cite{inceptionaugmentation}. However, we train the model on $299\times299$ resolution images, in an asynchronous fashion on 50 GPUs with an exponentially decaying learning rate starting at 0.045 and batch size 32. This attains 71.9\% validation accuracy. This model has over 11 million parameters.

\section{Batch normalization with population statistics}
\label{app:population_bn}
The population loss experiment is quite difficult to run on CIFAR-10 (we were unable to make Inception V3 train in this way without using a tiny learning rate of $10^{-6}$). In particular, it is important to divide the learning rate by a factor of 100, and also to spend at least 400 steps at the start of optimization with a learning rate of 0: this allows the batch normalization population statistics to stabilize with a better initialization than the default mean of 0.0 and variance of 1.0. 
\begin{figure}[h]
\includegraphics[width=\textwidth]{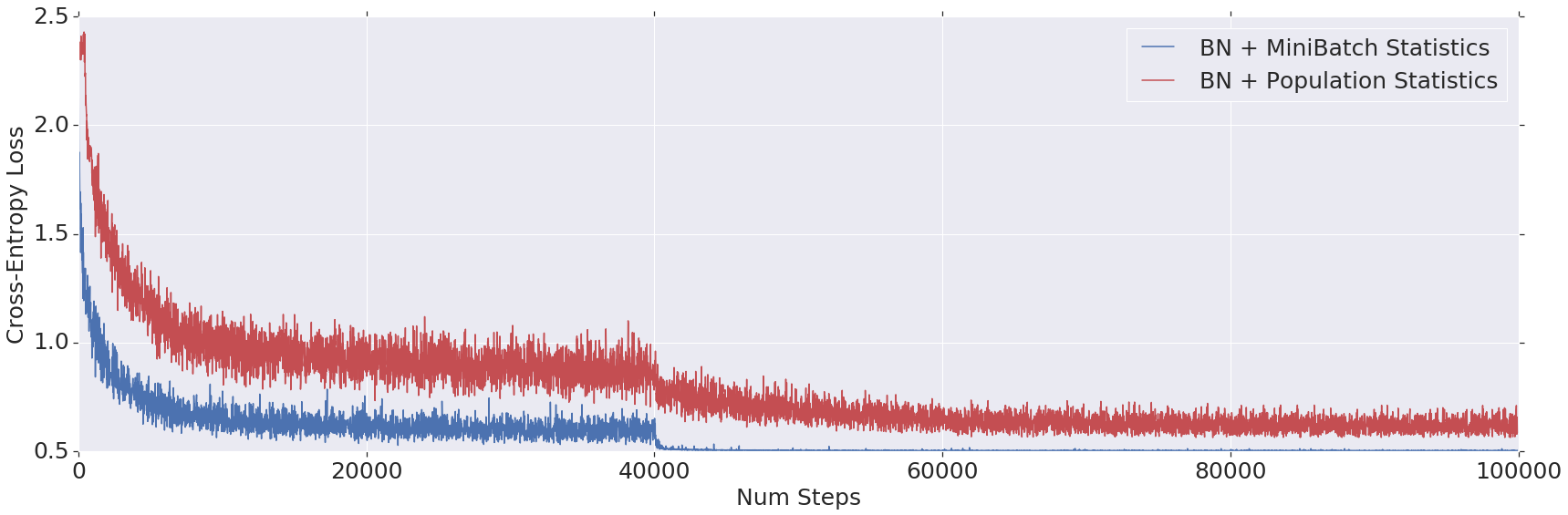}
\vspace{-0.5cm}
\caption{Optimization progress (in terms of loss) of batch normalization with mini-batch statistics and population statistics. \label{fig:bn_population_losses}}
\end{figure}

\end{document}